\title{Conditional Random Fields and Support Vector Machines: A Hybrid Approach}
\author{
Qinfeng Shi\\
The University of Adelaide\\
Adelaide, SA, Australia\\
\texttt{qinfeng.shi@ieee.org}
\and Mark Reid\\
The Australian National University\\
Canberra, Australia\\
 \texttt{mark.reid@anu.edu.au}\\
\and Tiberio Caetano\\
The Australian National University and NICTA\\
Canberra, Australia\\
 \texttt{Tiberio.Caetano@nicta.com.au}
}
\date{\today}
\begin{document}

\maketitle


\begin{abstract}
We propose a novel hybrid loss for multiclass and structured prediction
problems that is a convex combination of log loss for Conditional
Random Fields (CRFs) and a multiclass hinge loss for Support Vector
Machines (SVMs). We provide a sufficient condition for when the
hybrid loss is Fisher consistent for classification. This condition 
depends on a measure of dominance between labels -- specifically, 
the gap in per observation probabilities between the most likely 
labels. We also prove Fisher consistency is necessary for parametric consistency
when learning models such as CRFs.

We demonstrate empirically that the hybrid loss typically performs
as least as well as -- and often better than -- both of its constituent losses on 
variety of tasks. In doing so we also provide an empirical comparison of the
efficacy of probabilistic and margin based approaches to multiclass
and structured prediction and the effects of label dominance on
these results.
\end{abstract}

\section{Introduction} Conditional Random Fields (CRFs) and Support
Vector Machines (SVMs) can be seen as representative of two
different approaches to classification problems. The former is
purely \emph{probabilistic} -- the conditional probability of
classes given each observation is explicitly modelled -- while the
latter is purely \emph{discriminative} -- classification is
performed without any attempt to model probabilities.
Both approaches have their strengths and weaknesses. CRFs~\cite{LafMcCPer01,ShaPer03} 
are known to yield the Bayes optimal solution asymptomatically but often require a
large number of training examples to do accurate modelling. In
contrast, SVMs make more efficient use of
training examples but are known to be inconsistent when there are
more than two classes~\cite{TewBar07,Liu07}.

Despite their differences, CRFs and SVMs appear very
similar when viewed as optimisation problems. The most salient
difference is the loss used by each: CRFs are trained using a log
loss while SVMs typically use a hinge loss.
In an attempt to capitalise on their relative strengths and avoid
their weaknesses, we propose a novel
\emph{hybrid loss} which ``blends'' the two losses. 
After some background (\S\ref{sec:background}) we provide the following analysis:
We argue that Fisher Consistency
for Classification (FCC) -- a.k.a. classification calibration -- 
is too coarse a notion and introduce a
distribution-dependent refinement called Conditional Fisher Consistency
for Classification (\S\ref{sec:consistency}).
We prove the hybrid loss is conditionally FCC and give a noise condition that relates 
the hybrid loss's mixture parameter to a margin-like property of the data distribution 
(\S\ref{sec:cfcc}).
We then show that, although FCC is effectively a non-parametric condition, it is
also a necessary condition for consistent risk minimisation using parametric models 
(\S\ref{sec:parametric-fcc}).
Finally, we empirically test the hybrid loss on 
various domains including multiclass classification, Chunking and Named Entity 
Recognition and show it consistently performs better than either of its constituent 
losses (\S\ref{sec:experiments}).


\section{Losses for Multiclass Prediction}
\label{sec:background} In classification problems
\emph{observations} $x\in\Xcal$ are paired with \emph{labels}
$y\in\Ycal$ via some joint distribution $D$ over $\Xcal\times\Ycal$.
We will write $D(x,y)$ for the joint probability and $D(y|x)$ for
the conditional probability of $y$ given $x$. Since the labels $y$
are finite and discrete we will also use the notation $D_y(x)$ for
the conditional probability to emphasise that distributions over
$\Ycal$ can be thought of as vectors in $\RR^k$ for $k=|\Ycal|$.
We will use $q$ to denote distributions over $\Ycal$
when the observations $x\in\Xcal$ are irrelevant.

When the number of possible labels $k=|\Ycal|>2$ we call the
classification problem a \emph{multiclass} classification problem. A
special case of this type of problem is \emph{structured prediction}
where the set of labels $\Ycal$ has some combinatorial structure
that typically means $k$ is very large ~\cite{BakHofSchSmoetal07}.\footnote{
    In structured prediction, each output $y$ involves relationships among 
    `sub-components' of $y$. For example, the label of a pixel in an image depends on 
    the label of neighbouring pixels. That's where the term `structured' comes from. 
    However, different $y$'s are typically \emph{not} assumed to possess any joint 
    structure (i.e., it is typically assumed that the data is drawn from 
    $\Xcal \times \Ycal$). This is why structured prediction is no different in essence 
    than multiclass classification.}
As seen in the experimental section below a variety of problems,
such as text tagging, can be construed as structured prediction
problems.

Given $m$ training observations $S=\{(x_i,y_i)\}_{i=1}^m$ drawn i.i.d.
from $D$, the aim of the learner is to produce a predictor $h : \Xcal
\to \Ycal$ that minimises the \emph{misclassification error} $e_D(h)
= \PP_D\left[ h(x) \ne y \right]$. Since the true distribution is
unknown, an approximate solution to this problem is typically found
by minimising a regularised empirical estimate of the risk for a
\emph{surrogate loss} $\ell$. Examples of surrogate losses will be
discussed below.

Once a loss is specified, a solution is found by solving
\begin{equation}\label{eq:erm}
    \min_{f} \frac{1}{m}\sum_{i=1}^m \ell(f(x_i),y_i) + \Omega(f)
\end{equation}
where each \emph{model} $f : \Xcal \to \RR^k$ assigns a vector of
scores $f(x)$ to each observation and the regulariser $\Omega(f)$
penalises overly complex functions. A model $f$ found in this way
can be transformed into a predictor by defining $h_f(x) =
\argmax_{y\in\Ycal} f_y(x)$. We will overload the definition of 
misclassification error and sometimes write $e_D(f)$ as shorthand
for $e_D(h_f)$.

In structured prediction, the models are usually specified in terms
of a parameter vector $w\in\RR^n$ and a feature map $\phi :
\Xcal\times\Ycal \to \RR^n$ by defining $f_y(x;w) =
\inner{w}{\phi(x,y)}$ and in this case the regulariser is
$\Omega(f) = \frac{\lambda}{2}\|w\|^2$ for some choice of
$\lambda\in\RR$. This is the framework used to implement the SVMs
and CRFs used in the experiments described in Section~\ref{sec:experiments}.
Although much of our analysis does not assume any particular parametric model,
we explicitly discuss the implications of doing so in \S\ref{sec:parametric-fcc}.

A common surrogate loss for multiclass problems is a generalisation
of the binary class hinge loss used for Support Vector
Machines~\cite{CraSin00}:
\begin{equation}\label{eq:hinge}
    \ell_H(f,y) = \left[1 - M(f,y)\right]_+
\end{equation}
where $[z]_+ = z$ for $z>0$ and is 0 otherwise, and $M(f,y) = f_y -
\max_{y'\ne y}f'_y$ is the \emph{margin} for the vector $f\in\RR^k$.
Intuitively, the hinge loss is minimised by models that not only
classify observations correctly but also maximise the difference
between the highest and second highest scores assigned to the
labels. 

While there are other, consistent losses for SVMs~\cite{TewBar07,Liu07}, these cannot scale up to structured
estimations due to computational issues.
For example, the multiclass hinge loss $\sum_{j\neq y}[1+f_j(x)]_+$ is shown to be 
consistent in \cite{Liu07}. However, it requires evaluating $f$ on all possible labels 
except the true $y$. This is intractable for structured estimation where the possible labels grow exponentially with the size of the structured output.
Since the other known and consistent multiclass hinge losses have similar 
intractability we will only focus on the margin-based loss $\ell_H$ which can be
evaluated quickly using techniques from dynamic programming, linear programming \emph{etc.} \cite{TsoJoaHofAlt05, TasGueKol04, BakHofSchSmoetal07}.

\subsection{Probabilistic Models and Losses}
\label{sec:background-pmodel} The scores given to labels by a
general model $f:\Xcal\to\RR^k$ can be transformed into a
conditional probability distribution $p(x;f)\in[0,1]^k$ by letting
\begin{equation}\label{eq:prob}
    p_y(x;f) = \frac{\exp(f_y(x))}{\sum_{y\in\Ycal}{\exp(f_y(x))}}.
\end{equation}

It is easy to show that under this interpretation the hinge loss for
a probabilistic model $p = p(\cdot;f)$ is given by
\begin{equation*}
    \ell_H(p,y) = \left[1-\ln\frac{p_y}{\max_{y'\ne y}p_{y'}}\right]_+
\end{equation*}

Another well known loss for probabilistic models, such as
CRFs, is the log loss
\begin{equation*}
    \ell_L(p,y) = -\ln p_y.
\end{equation*}
This loss penalises models that assign low probability to likely
instances labels and, implicitly, that assign high probability to
unlikely labels.

We now propose a novel \emph{hybrid loss} for probabilistic models that is a convex 
combination of the hinge and log losses
\begin{equation}
    \ell_\alpha(p,y)
    = \alpha\ell_L(p,y) + (1-\alpha)\ell_H(p,y) \label{eq:hybrid}
\end{equation}
where mixture of the two losses is controlled by a parameter
$\alpha\in[0,1]$. Setting $\alpha = 1$ or $\alpha = 0$ recovers the
log loss or hinge loss, respectively. The intention is that choosing
$\alpha$ close to 0 will emphasise having the maximum gap between the largest and 
second largest label probabilities while an $\alpha$ close to 1 will
force models to prefer accurate probability assessments over strong
classification.

\section{Fisher Consistency For Classification}\label{sec:consistency}
A desirable property for a loss is that, given enough data, the models
obtained by minimising the loss at each observation will make predictions
that are consistent with the true label probabilities at each observation.

Formally, we say vector $f\in\RR^{|\Ycal|}$ is \emph{aligned} with
a distribution $q$ over $\Ycal$ whenever maximisers of $f$ are
also maximisers for $q$. That is, when
\(
    \argmax_{y\in\Ycal} f_y \subseteq \argmax_{y\in\Ycal} q_y.
\)
If, for all label distributions $q$, minimising the conditional risk 
$L(f) = \EE_{y\sim q}[\ell(f,y)]$ for a loss $\ell$ yields a vector $f^*$ aligned
with $q$ we will say $\ell$ is \emph{Fisher consistent for
classification} (FCC) \footnote{
    Note that the Fisher consistency for classification is weaker than Fisher 
    consistency for density estimation. The former requires the same prediction only, 
    while the latter requires the estimated density is the same as the true data
    distribution. In this paper, we focus on the former only.} 
-- or \emph{classification calibrated} \cite{TewBar07}.
This is an important property for losses since it is
equivalent to the asymptotic consistency of the empirical risk
minimiser for that loss \cite[Theorem 2]{TewBar07}.

The standard multiclass hinge loss $\ell_H$ is known to be inconsistent for
classification when there are more than two
classes~\cite{Liu07,TewBar07}. The analysis in \cite{Liu07} shows
that the hinge loss is inconsistent whenever there is an instance $x$ with
a \emph{non-dominant} distribution -- that is, $D_y(x) <
\frac{1}{2}$ for all $y\in\Ycal$. Conversely, A distribution is
\emph{dominant} for an instance $x$ if there is some $y$ with
$D_y(x) > \frac{1}{2}$.
In contrast, the log loss used to train non-parametric CRFs is
Fisher consistent for probability estimation -- that is, the
associated risk is minimised by the true conditional distribution --
and thus $\ell_C$ is FCC since the minimising distribution is equal
to $D(x)$ and thus aligned with $D(x)$. 

\subsection{Conditional Consistency of the Hybrid Loss}
\label{sec:cfcc}

In order to analyse the consistency of the hybrid loss we introduce
a more refined notion of Fisher consistency that takes into account
the true distribution of class labels. If $q=(q_1,\ldots,q_k)$ is a
distribution over the labels $\Ycal$ then we say the loss $\ell$ is
\emph{conditionally FCC with respect to $q$} whenever minimising the
conditional risk w.r.t. $q$, $L_q(f) = \EE_{y\sim
q}\left[\ell(f,y)\right]$ yields a predictor $f^*$ that is
consistent with $q$. Of course, if a loss $\ell$ is conditionally
FCC w.r.t. $q$ for \emph{all} $q$ it is, by definition,
(unconditionally) FCC.

\begin{theorem}\label{thm:fcc}
    Let $q=(q_1,\ldots,q_k)$ be a distribution over labels and let
    $y_1 = \max_y q_y$ and $y_2 = \max_{y\ne y_1} q_y$ be the two most likely
    labels.
    Then the hybrid loss $\ell_\alpha$ is conditionally FCC for $q$ whenever
    $q_{y_1} > \frac{1}{2}$ or
    \begin{equation}\label{eq:alpha}
        \alpha > 1 - \frac{q_{y_1} - q_{y_2}}{1-2q_{y_1}}.
    \end{equation}
\end{theorem}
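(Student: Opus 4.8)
The plan is to work directly with the conditional risk $L_q(p)=\EE_{y\sim q}[\ell_\alpha(p,y)]$ regarded as a function of the probability vector $p$, since the softmax map \eqref{eq:prob} is a monotone bijection onto the interior of the simplex and hence $\argmax_y f_y=\argmax_y p_y(\cdot;f)$; alignment is therefore equivalent to $\argmax_y p^*_y\subseteq\argmax_y q_y$ for a minimiser $p^*$. I would relabel so that $q_{y_1}\ge q_{y_2}\ge\cdots$ and, assuming $q_{y_1}>q_{y_2}$ (the only regime in which \eqref{eq:alpha} can hold, since a top tie forces the bound above $1$), note that alignment means exactly $\argmax_y p^*_y=\{y_1\}$. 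A minimiser exists by continuity of $L_q$ on the compact simplex, and for $\alpha>0$ the log term keeps $p^*_y>0$ whenever $q_y>0$.

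First I would show a minimiser cannot invert the $q$-order. Swapping coordinates $a$ and $b$ of $p^*$ leaves $\ell_\alpha(p^*,y)$ unchanged for $y\neq a,b$ and merely exchanges the two affected terms, so the risk changes by exactly $(q_a-q_b)[\ell_\alpha(p^*,b)-\ell_\alpha(p^*,a)]$. A short case check on which coordinate attains the inner maximum $\max_{y'\neq y}p_{y'}$ shows that $p^*_a<p^*_b$ forces $\ell_\alpha(p^*,b)<\ell_\alpha(p^*,a)$ strictly (the log term is strictly decreasing in the probability and the hinge term is nonincreasing), so $q_a>q_b$ compels $p^*_a\ge p^*_b$. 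Hence $y_1\in\argmax_y p^*_y$ always, and the \emph{only} way alignment can fail is a tie at the top, $p^*_{y_1}=p^*_{y_2}=t$.

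It then remains to rule out this tie. I would take the one-sided perturbation $p_{y_1}=t+\epsilon,\ p_{y_2}=t-\epsilon$ with the other coordinates fixed, and compute the right derivative of $\epsilon\mapsto L_q$ at $\epsilon=0^+$. At the tie every relevant hinge term is active (its positive part is strictly positive, since the two top margins vanish and the rest are negative), which makes the one-sided derivatives explicit: the $y_1$- and $y_2$-terms contribute $-2q_{y_1}/t$ and $+2q_{y_2}/t$ because their inner maxima are the moving coordinates, while each remaining label contributes $+q_y/t$ since its inner maximum switches to the now-strict top $y_1$. Adding the log contribution $\tfrac{\alpha}{t}(q_{y_2}-q_{y_1})$, the right derivative has the sign of $\alpha(q_{y_2}-q_{y_1})+(1-\alpha)(1-3q_{y_1}+q_{y_2})$; setting this negative and collecting the $\alpha$-coefficient $2q_{y_1}-1$ yields precisely \eqref{eq:alpha} when $q_{y_1}<\tfrac12$, and an inequality that holds for \emph{all} $\alpha\in[0,1]$ when $q_{y_1}>\tfrac12$ (the coefficient flips sign). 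A negative right derivative means the tie is not even a local minimum, contradicting optimality of $p^*$, so under either hypothesis $p^*_{y_1}>p^*_{y_2}$ and $\ell_\alpha$ is conditionally FCC for $q$; the same computation shows the threshold is essentially tight.

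The main obstacle is the bookkeeping forced by the non-smoothness of $\ell_H$: one must track which label attains $\max_{y'\neq y}p_{y'}$ as the perturbation moves the top two coordinates, and verify each positive part is genuinely active so the naive derivatives are valid. The related subtlety I expect to need the most care is a \emph{multi-way} tie at the top ($p^*_{y_1}=\cdots=p^*_{y_r}=t$ with $r>2$): there the inner maximum for $y_1$ remains pinned at $t$ by the other tied labels, which changes the derivative, so the perturbation must lower all tied competitors simultaneously or the multi-way case must be reduced to the pairwise one. The two-way computation above is the crux that produces \eqref{eq:alpha}, and the remaining work is confirming it survives this reduction.
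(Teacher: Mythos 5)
Your proposal follows essentially the same route as the paper's own proof: work directly in the probability simplex, rule out a strict inversion of the top two labels by a swap argument whose risk change is $(q_{y_1}-q_{y_2})[\ell_\alpha(p,y_2)-\ell_\alpha(p,y_1)]$, and rule out a top tie by the one-sided perturbation $p_{y_1}\mapsto t+\epsilon$, $p_{y_2}\mapsto t-\epsilon$, whose right derivative reduces to $-(q_{y_1}-q_{y_2})+(1-\alpha)(1-2q_{y_1})$ and hence to condition (\ref{eq:alpha}). The multi-way-tie subtlety you flag is genuine, but the paper's proof silently assumes $p_y<p_{y_2}$ for $y\ne y_1,y_2$ and does not treat it either, so your sketch matches the published argument in both substance and level of rigour.
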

For the proof see Appendix \ref{sec:proof-fcc}.
Theorem~\ref{thm:fcc} can be inverted and interpreted as a
constraint on the conditional distributions of some data distribution $D$ such that a 
hybrid loss with parameter $\alpha$ will yield consistent predictions. Specifically,
the hybrid loss will be consistent if, for all $x\in\Xcal$ such that
$q = D(x)$ has no dominant label (\emph{i.e.}, $D_y(x) \le
\frac{1}{2}$ for all $y\in\Ycal$), the gap $D_{y_1}(x) -
D_{y_2}(x)$ between the top two probabilities is larger than
$(1-\alpha)(1-2D_{y_1}(x))$. When this is not the case for some $x$,
the classification problem for that instance is, in some sense, too
difficult to disambiguate. In this sense, the bound can be seen as a property
on distributions akin to Tsybakov's noise condition 
\cite{Che06}. Making this analogy precise is the focus of ongoing work.

\subsection{Parametric Consistency}\label{sec:parametric-fcc}

Since Fisher consistency is defined point-wise on observations, it is not directly
applicable to parametric models as these enforce inter-observational constraints 
(\emph{e.g.} smoothness).
Abstractly, assuming parametric hypotheses can be seen as a restriction over
the space of allowable scoring functions. When learning parametric models, risks
are minimised over some subset $\Fcal$ of functions from $\Xcal \to \RR^{\Ycal}$
instead of all possible functions.
We now show that, given some weak assumptions on the hypothesis class $\Fcal$, a 
loss being FCC is a necessary condition if the loss is also to be 
$\Fcal$-consistent.

We say a loss $\ell$ is $\Fcal$-consistent if, for any distribution, minimising its
associated risk over $\Fcal$ yields a hypothesis with minimal
0-1 loss in $\Fcal$.\footnote{
    While this is simpler and stronger than the usual asymptotic notation of 
    consistency \cite{LugVay04} it most readily relates to FCC and suffices for our 
    discussion since we are only establishing that FCC is a necessary condition.}
Recall that the risk of a hypothesis $f\in\Fcal$ associated with a loss $\ell$ and
distribution $D$ over $\Xcal\times\Ycal$ is $L_D(f) = \EE_D\left[\ell(y,f(x))\right]$
and its 0-1 risk or misclassification error is
$e_D(f) = \PP_D\left[y\ne \argmax_{y'\in\Ycal} f_{y'}(x)\right]$. Formally then, given
a function class $\Fcal$ we say \emph{$\ell$ is $\Fcal$-consistent} if, for all
distributions $D$,
\begin{equation}\label{eq:F-consistency}
    L_D(f^*) = \inf_{f\in\Fcal} L_D(f)
    \implies
    e_D(f^*) = \inf_{f\in\Fcal} e_D(f).
\end{equation}

We need a relatively weak condition on function classes $\Fcal$ to state our theorem.
We say a class $\Fcal$ is \emph{regular} if the follow two properties hold: 1) For any
$g\in\RR^{\Ycal}$ there exists an $x\in\Xcal$ and an $f\in\Fcal$ so that $f(x) = g$;
and 2) For any $x\in\Xcal$ and $y\in\Ycal$ there exists an $f\in\Fcal$ so that
$y = \argmax_{y'\in\Ycal} f_{y'}(x)$.
Intuitively, the first condition says that for any distribution over labels
there must be a function in the class which models it perfectly on some point in the
input space.
The second condition requires that any mode can be modelled on any input.
Importantly, these properties are fairly weak in that they do not say anything about the
constraints a function class might put on relationships between distributions modelled
on different inputs.

\begin{theorem}\label{thm:fcc-necessary}
    For regular function classes $\Fcal$ any loss that is $\Fcal$-consistent is
    necessarily also Fisher Consistent for Classification (FCC).
\end{theorem}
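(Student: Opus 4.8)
The plan is to prove the contrapositive: if a loss $\ell$ is not FCC, then it cannot be $\Fcal$-consistent for any regular class $\Fcal$. The guiding intuition is that regularity lets us \emph{localise} the global $\Fcal$-consistency condition to a single observation, where it collapses to exactly the pointwise FCC condition. So suppose $\ell$ fails to be FCC. Then there is a label distribution $q$ over $\Ycal$ admitting a minimiser $f^*$ of the conditional risk $L_q$ over $\RR^{\Ycal}$ that is \emph{not} aligned with $q$. I would fix a label $\hat{y}\in\argmax_{y} f^*_y$ with $\hat{y}\notin\argmax_y q_y$, together with a genuine mode $y^*\in\argmax_y q_y$, and note that $q_{\hat y} < q_{y^*}$.

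Next I would use the first regularity property to realise this bad minimiser inside $\Fcal$ at a single point: there exist $x_0\in\Xcal$ and $f_0\in\Fcal$ with $f_0(x_0)=f^*$. Define the witnessing distribution $D$ to be concentrated on $x_0$, with conditional $D(\cdot\mid x_0)=q$. Then for every $f\in\Fcal$ the risk factors as $L_D(f)=L_q(f(x_0))$, and since $f^*$ minimises $L_q$ over all of $\RR^{\Ycal}$ we obtain $L_D(f_0)=L_q(f^*)=\inf_{f\in\Fcal} L_D(f)$; that is, $f_0$ is a surrogate-risk minimiser over $\Fcal$.

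It then remains to show $f_0$ is not $0$--$1$ optimal. At $x_0$ the predictor $h_{f_0}$ outputs $\hat{y}$, so $e_D(f_0)=1-q_{\hat y}$. Meanwhile the second regularity property, applied to the point $x_0$ and the mode $y^*$, furnishes some $f_1\in\Fcal$ predicting $y^*$ at $x_0$, giving $e_D(f_1)=1-q_{y^*}$; since any predictor errs at $x_0$ with probability at least $1-\max_y q_y = 1-q_{y^*}$, this shows $\inf_{f\in\Fcal} e_D(f)=1-q_{y^*}$. As $q_{\hat y}<q_{y^*}$ we conclude $e_D(f_0)>\inf_{f\in\Fcal} e_D(f)$, contradicting $\Fcal$-consistency and completing the contrapositive.

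The main delicate point I expect is the treatment of ties in the $\argmax$ that defines the prediction at $x_0$: if $f^*$ attains its maximum simultaneously at $\hat y$ and at a mode of $q$, then whether $e_D(f_0)=1-q_{\hat y}$ depends on the tie-breaking convention. I would resolve this either by adopting an adversarial decoding rule (natural, since calibration ought to hold under worst-case decoding) or, assuming $\ell$ is continuous in its score argument, by perturbing $f^*$ slightly toward $\hat y$ so that $\hat y$ becomes its unique maximiser while $L_q$ stays within $\epsilon$ of the minimum, and then passing to the limit. I would also flag the mild assumption, implicit in the definition of FCC, that the conditional-risk infimum is attained, so that $f^*$ — and hence the witness $f_0$ — genuinely exists.
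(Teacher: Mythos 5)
Your proposal is correct and follows essentially the same route as the paper's proof: realise the non-aligned conditional-risk minimiser at a single point via the first regularity property, concentrate the distribution there so the full risk collapses to the conditional risk, and invoke the second regularity property to exhibit a strictly better $0$--$1$ predictor, contradicting $\Fcal$-consistency. Your explicit handling of argmax ties and of attainment of the infimum is a refinement the paper glosses over, but it does not change the argument.
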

The full proof is in Appendix~\ref{sec:proof-fcc-nec}. The argument sketch is: 
since $\Fcal$-consistency requires (\ref{eq:F-consistency}) to hold for all $D$ it
must hold for a $D$ with all of its mass on a single observation $x_0$. 
If $\ell$ is not FCC there must be some label distribution $q$ and vector $g$ so that 
$L_q(g)$ is minimal but $e_q(g)$ is not. Choosing 
$x_0$ so that $f(x_0) = g$ (by the regularity of $\Fcal$) and setting 
$D(y|x) = q$ gives a contradiction.
 
\subsection{Generalisation Bound }\label{sec:genbound}
We now give a PAC-Bayesian bound \cite{McAllester98} for the generalisation error 
$e_D$ of the hybrid model that can be specialised to recover a bound for 
the multiclass hinge loss. A similar, alternative bound for the hybrid loss and an 
extended proof is available in Appendix \ref{sec:pac-bayes-proof}.
\begin{theorem}[Generalisation Margin Bound]
\label{thm:pac-bayes-hybrid-single} For any data distribution $D$,
for any prior $P$ over $w$, for any $w$, any $\delta \in (0,1]$ and
for any $\gamma > 0$ and any $\alpha\in(0,1]$, with probability at least $1-\delta$ over
random samples $S$ from $D$ with $m$ instances, there exists a constant $c$, such that
\begin{align*}
&e_D \leq \PP_{(x,y)\sim S}(\EE_Q (M(w',y)) \leq \gamma) +O\left(\sqrt{\frac{\frac{||w||^2}{2c(1-\alpha)\gamma^{2} } \ln(m|\Ycal|)
+\ln m + \ln \delta^{-1} }{m}}\right).
\end{align*}
\end{theorem}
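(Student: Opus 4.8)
The plan is to instantiate the PAC-Bayesian theorem of McAllester~\cite{McAllester98} with a Gaussian posterior and prior over the weight vector, and then derandomise the resulting stochastic (Gibbs) bound into a statement about the deterministic predictor $h_w$. Concretely, I would take the posterior $Q = N(w,\sigma^2 I)$ and the prior $P = N(0,\sigma^2 I)$, so that the only term feeding the divergence is $\mathrm{KL}(Q\|P) = \|w\|^2/(2\sigma^2)$. The variance $\sigma^2$ is kept as a free parameter until the end; anticipating the final form, it will be set proportional to $(1-\alpha)\gamma^2/\ln(m|\Ycal|)$, which is exactly what turns $\mathrm{KL}(Q\|P)$ into the leading summand $\frac{\|w\|^2}{2c(1-\alpha)\gamma^2}\ln(m|\Ycal|)$. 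Applying the PAC-Bayes theorem then gives, with probability at least $1-\delta$, a bound of the true Gibbs risk by the empirical Gibbs risk plus a term of order $O(\sqrt{(\mathrm{KL}(Q\|P)+\ln(m/\delta))/m})$, which already supplies the $\ln m$ and $\ln\delta^{-1}$ contributions under the square root.

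The two derandomisation steps carry the real content. On the true-risk side I would bound the deterministic error $e_D = e_D(h_w)$ by an averaged-margin quantity. Since $f_y(x;w') = \inner{w'}{\phi(x,y)}$ is linear in $w'$ and the competing maximum $\max_{y'\ne y} f_{y'}(x;w')$ is convex in $w'$, Jensen's inequality with $\EE_Q[w']=w$ yields $\EE_Q[M(w',y)] \le M(w,y)$. Hence whenever $h_w$ errs on $(x,y)$, i.e. $M(w,y)\le 0$, the averaged margin also satisfies $\EE_Q[M(w',y)]\le 0$, so $e_D \le \PP_D(\EE_Q[M(w',y)]\le 0)$. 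This is the quantity I then transfer to the sample via PAC-Bayes.

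On the empirical side I would relate this to $\PP_{(x,y)\sim S}(\EE_Q[M(w',y)]\le\gamma)$. The subtlety is that PAC-Bayes directly controls the Gibbs risk $\EE_Q[\PP(\cdot)]$, with the expectation \emph{outside} the indicator, whereas both the quantity bounding $e_D$ and the target empirical term have the expectation \emph{inside}. I would bridge these by sandwiching between Gibbs quantities: for a fixed example the score gap $f_y(x;w')-f_{y'}(x;w')$ is Gaussian with variance $\sigma^2\|\phi(x,y)-\phi(x,y')\|^2$, so a union bound over the at most $|\Ycal|-1$ competing labels shows that a point whose averaged margin exceeds $\gamma$ is flipped by the draw $w'$ with only small probability. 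This union bound is the source of the $|\Ycal|$ inside the logarithm, and pairing it with the $m$-dependent PAC-Bayes sample term gives the $\ln(m|\Ycal|)$ factor multiplying the divergence. Fixing $\sigma^2 = c(1-\alpha)\gamma^2/\ln(m|\Ycal|)$ balances the two competing effects -- a smaller $\sigma$ sharpens the derandomisation but inflates $\mathrm{KL}(Q\|P)=\|w\|^2/(2\sigma^2)$ -- and makes $\gamma/\sigma$ of order $\sqrt{\ln(m|\Ycal|)}$, precisely the scale at which the Gaussian tail is negligible. The factor $(1-\alpha)$ is the weight of the hinge (margin) component in $\ell_\alpha=\alpha\ell_L+(1-\alpha)\ell_H$; tracking it through the variance choice is what specialises the bound, so that as $\alpha\to 0$ one recovers the advertised multiclass-hinge margin bound, while as $\alpha\to 1$ the margin term degrades, reflecting that a pure CRF carries no intrinsic margin guarantee. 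Collecting the divergence, sample, and confidence contributions under a single square root and absorbing absolute constants into $O(\cdot)$ and $c$ then yields the stated inequality.

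I expect the main obstacle to be this empirical-side derandomisation in the structured setting. Because $M(w',y)$ is a maximum over exponentially many labels, the stochastic margin is not itself Gaussian, so controlling the event that a $Q$-perturbation changes the prediction requires a union bound whose cost must be held to $\ln|\Ycal|$ rather than $|\Ycal|$. It is exactly this requirement that forces the $\gamma^2$ scaling of the variance and makes the argument most sensitive to the constants hidden in $c$, so keeping the ``expectation inside'' and ``expectation outside'' quantities within a lower-order gap is the delicate step of the proof.
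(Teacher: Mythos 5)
Your proposal reconstructs essentially the same argument as the paper's (sketched) proof: a Gaussian prior/posterior PAC-Bayes margin bound in the Langford--Shawe-Taylor/McAllester style, with $e_D$ controlled by the averaged margin $\EE_Q[M(w',y)]$, the $\ln(m|\Ycal|)$ factor arising from derandomisation over competing labels, $\text{KL}(Q\|P)$ reducing to a multiple of $\|w\|^2$, and the $(1-\alpha)$ factor traced to the weight of the hinge component. The remaining differences are reparameterisations rather than a different route --- you obtain $e_D \le \PP_D(\EE_Q[M(w',y)]\le 0)$ via concavity of $M$ in $w'$ and Jensen where the paper invokes a symmetry argument (your inequality is the direction actually needed), and you tune the posterior variance $\sigma^2$ where the paper fixes unit-variance Gaussians and instead absorbs the scaling into the margin threshold via $\gamma' \le c(1-\alpha)\gamma$ --- and your account of precisely how $(1-\alpha)$ enters is no less (and no more) rigorous than the paper's own sketch.
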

\begin{proof} [sketch]
By choosing the weight prior $P(w) =
\frac{1}{Z}\exp(-\frac{\|w\|^2}{2})$ and the posterior $Q(w') =
\frac{1}{Z}\exp(-\frac{\|w'-w\|^2}{2})$, one can show $e_D = \PP_{D}
(\EE_Q M(w',y) \leq 0)$ by symmetry argument proposed in
~\cite{LanSha03,McAllester07}. Applying the PAC-Bayes margin
bound~\cite{LanSeeMeg01,ZhuXin09} and knowing the margin threshold $\gamma' \leq c(1-\alpha)\gamma$ and $\text{KL}(Q||P)
= \frac{||w||^2}{2}$ yields the theorem.
\end{proof}

Setting $\alpha=0$ in the above bound recovers a margin bound for SVMs (see \cite{LanSeeMeg01} for an averaging classifiers of SVMs, and \cite{ZhuXin09} for structured case).
Unfortunately, one cannot set $\alpha=1$ to achieve a PAC-Bayes bound for a pure
log loss classifier in this manner due the the $(1-\alpha)^{-1}$ dependence. 
However, to our knowledge, we are not aware of any PAC-Bayes bound on the 
generalisation error for log loss.

%

\section{Experiments}\label{sec:experiments}

\begin{wrapfigure}{r}{0.4\linewidth}
 \centering
     \includegraphics[width=0.9\linewidth]{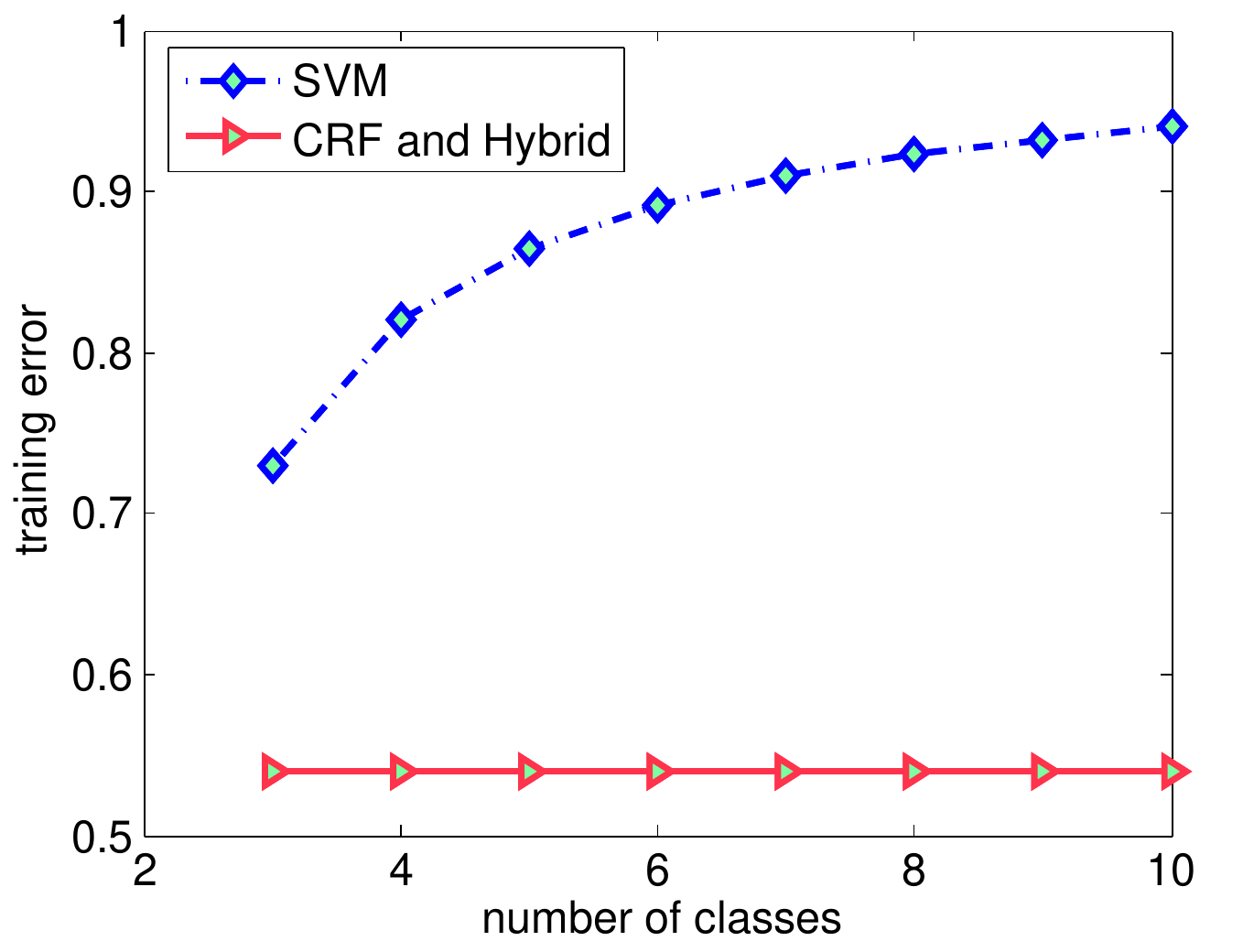}
  \caption{Training Error for
       the hybrid ($\alpha = 0.5$), log and hinge loss vs. number of classes 
       in non-dominant case.} \label{fig:multiclass}
\end{wrapfigure}

The analysis of the hybrid loss suggests it should be able to
outperform the hinge loss due to its improved consistency on distributions with 
non-dominant labels. Furthermore, it should also make more efficient use of 
data than log loss on distributions with dominant labels.
These hypotheses were confirmed by applying the hybrid, log and hinge losses to a 
number of synthetic multiclass data sets in which the data set size and proportion 
of examples with non-dominant labels are carefully controlled. 

We also compared the hybrid loss with the log and hinge losses on
several real structured estimation problems and observed that the hybrid loss
regularly outperforms the other losses and consistently performs at least as well
as the better of the log and hinge losses on any problem.

\subsection{Multiclass Classification}
Two types of multiclass simulations were performed. The first examined the 
performances of the hybrid, log and hinge losses when no observations had a 
dominant label. That is all observations were drawn from a $D$ with $D_y(x) <1/2$ 
for all labels $y$. The second experiment considered distributions with a
controlled mixture of observations with dominant and non-dominant labels.

\paragraph{Non-dominant Distributions} 
To make the experiment as simple as possible, we considered an observation space of 
size $|\Xcal|=1$ and focused on varying the number of labels and their 
probabilities. 
The label set $\Ycal$ took the sizes $|\Ycal|=3,4,5,\dots,10$.
One label $y^*\in\Ycal$ was assigned probability $D_{y^*}(x) = 0.46$ and the 
remainder are given an equal portion of 0.54 (\emph{e.g.}, in the 3 class case the 
other labels each have probability 0.27, and in the 10 class case, 0.06).
Note that this means for all the label set sizes, the gap 
$D_{y^*}(x) - D_{y}(x)$ is at least 0.19 which is always greater than 
$(1-\alpha)(1-2D_{y^*}(x)) = 0.04$ so the hybrid consistency condition 
(\ref{eq:alpha}) is always met.

Features were a constant value in $\RR^2$ as were the parameter vectors $w_y 
\in\RR^2$ for $y\in\Ycal$. Models were found using LBFGS \cite{Byretal94}.
The resulting training errors for hinge, log and hybrid losses are
plotted in Figure~\ref{fig:multiclass} as a function of the number of labels.
As we can clearly see, the hinge loss error increases as the number
of classes increases, whereas the errors for the log and the hybrid
losses remain a constant ($1-D_{y^*}(x)$), in concordance with the consistency 
analysis.

\paragraph{Mix of Non-dominant and Dominant Distributions}
The second synthetic experiment examined how the three losses performed 
given various training set sizes (denoted by $m$) and various proportions of
instances with non-dominant distributions (denoted by $\rho$).

We generated 60 different data sets, all with $\Ycal=\{1,2,3,4,5\}$, 
in the following manner: 
Instances came from either a non-dominant class distribution 
or a dominant class distribution. In the non-dominant class case,
$x\in\RR^100$ is set to a predefined, constant, non-zero vector 
and its label distribution is $D_{1}(x) = 0.4$ and $D_y(x) = 0.15$ for 
$y > 1$. 
In the dominant case, each dimension $x_i$ was drawn from a
normal distribution $N(\mu = 1+y,\sigma = 0.6)$ depending on the class 
$y = 1,\dots, 5$. 
The proportion $\rho$ ranged over 10 values $\rho=0.1, 0.2, 0.3,\dots,1$ and for 
each $\rho$, test and validation sets of size 1000 were generated.
Training set sizes of $m = 30, 60, 100, 300, 600, 1000$ were used for each
$\rho$ value for a total of 60 training sets.
The optimal regularisation parameter $\lambda$ and hybrid loss parameter $\alpha$ 
were selected using the validation set for each loss on each training set.
Then models with parameters $w_y\in\RR^{100}$ for $y\in\Ycal$ were found using 
LBFGS \cite{Byretal94} for each of the three losses on each of the 60 training sets and then 
assessed using the test set.

The results are summarised in Figure~\ref{fig:multiclass_mix}. 
Each point shows the test accuracy for a pair of losses. 
The predominance of points above the diagonal lines in a) and b) show that the 
hybrid loss outperforms the hinge loss and the log loss in most of the data sets. 
while the log and hinge losses perform competitively against each other.

\begin{figure*}[tb]
 \centering
 \subfigure[Hybrid v.s. Hinge (31/15)]{
     \includegraphics[width=0.3\linewidth]{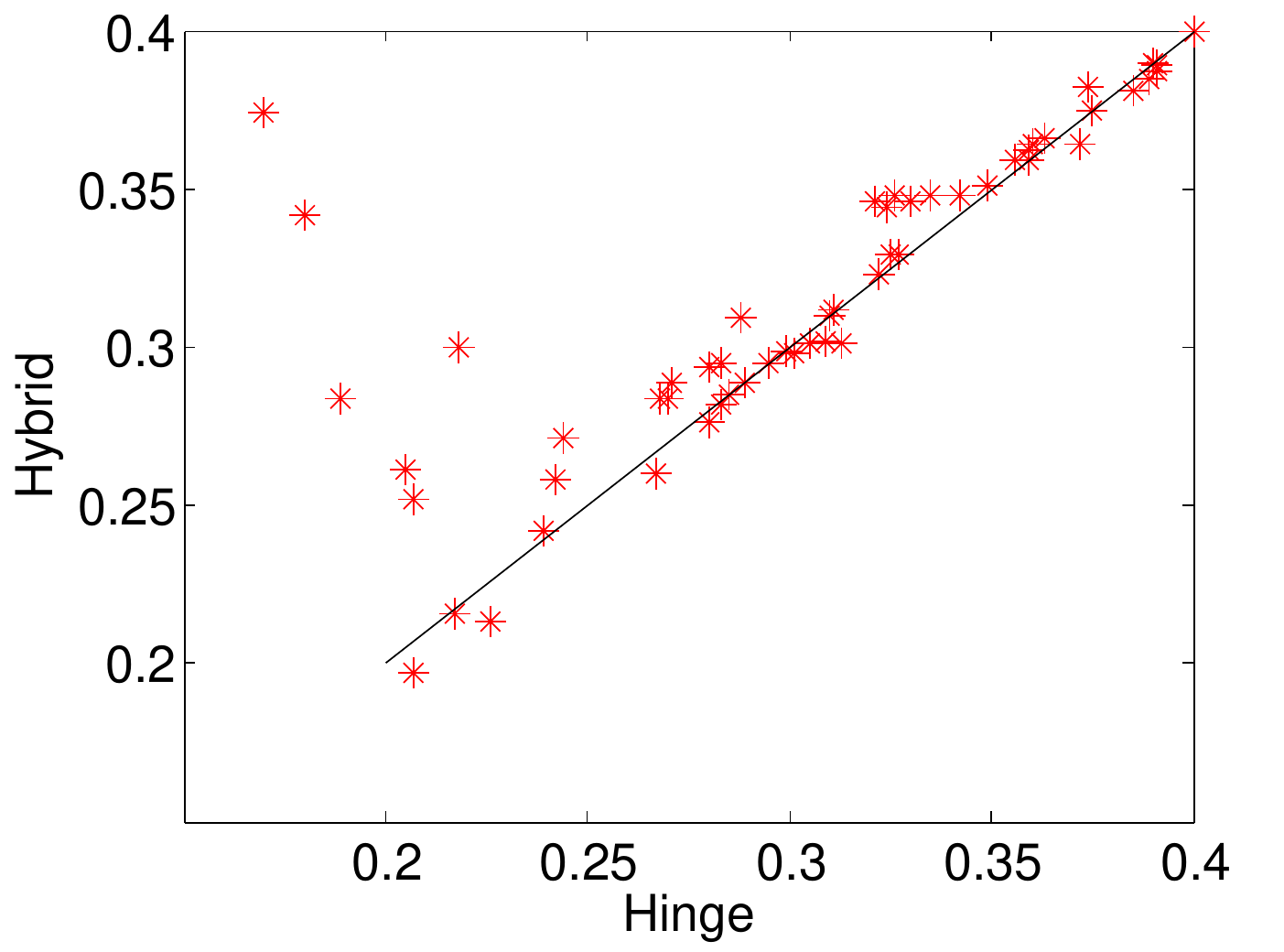}
     \label{fig:hybrid_svm}
     }
      \subfigure[Hybrid v.s. Log (34/15)]{
     \includegraphics[width=0.3\linewidth]{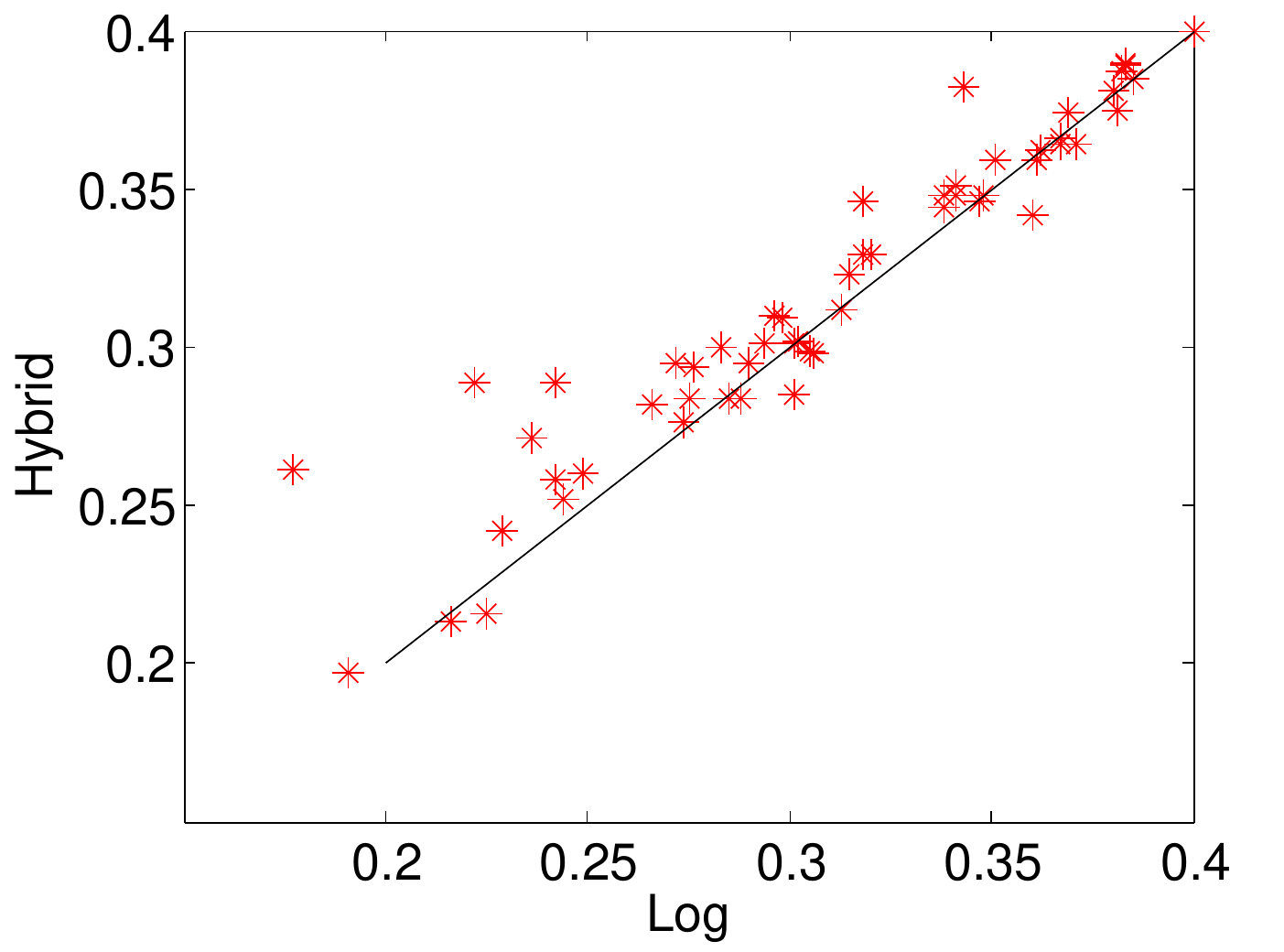}
     \label{fig:hybrid_crf}
     }
      \subfigure[Hinge v.s. Log (30/23)]{
     \includegraphics[width=0.3\linewidth]{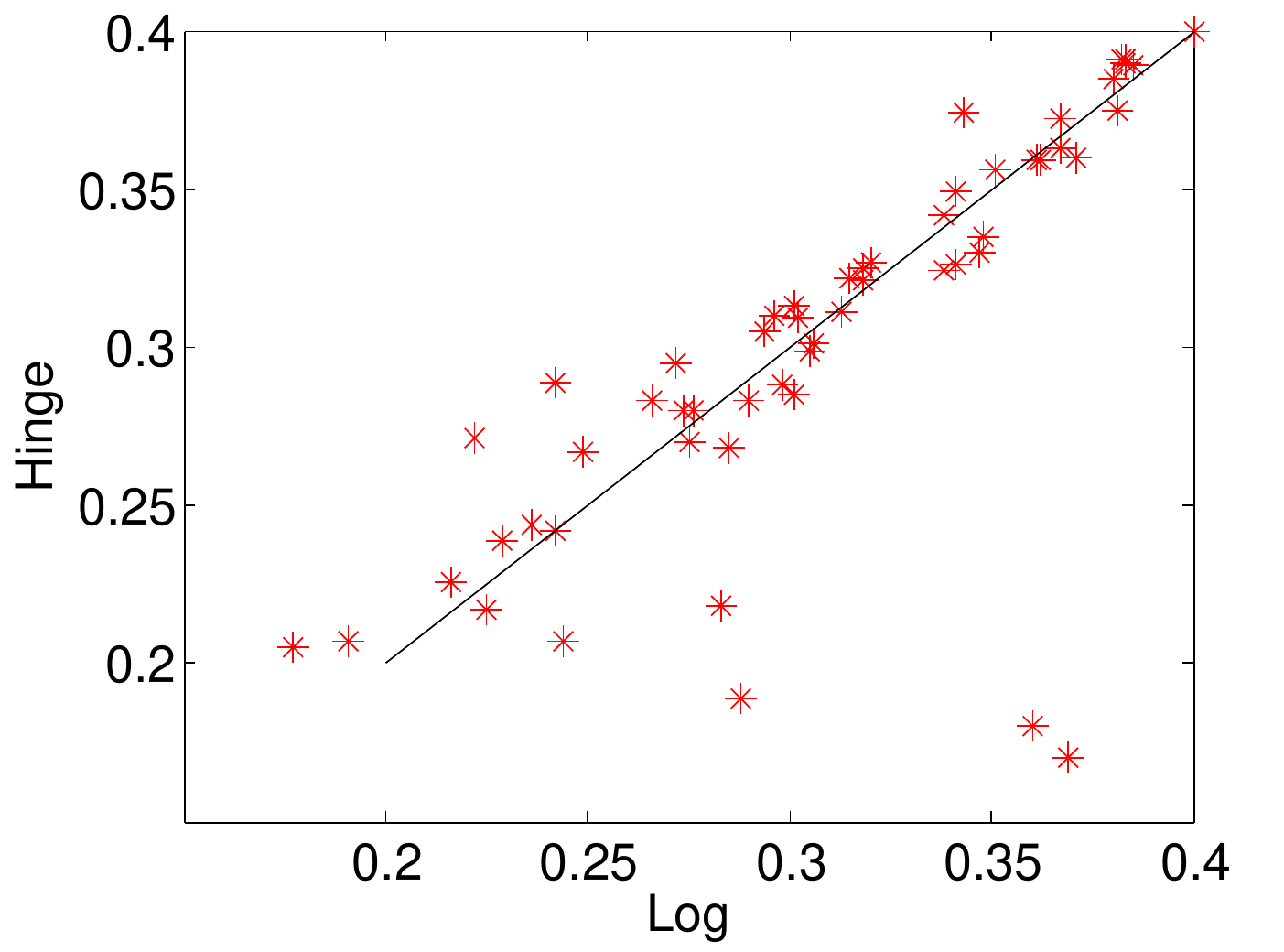}
     \label{fig:svm_crf}
     }
  \caption{Performance of the hybrid, hinge, and log losses on 
  non-dominant/dominant mixtures.
  Points denote pairs of test accuracies for models trained on one of 60 data sets
  using the losses named on the axes.
  Score $(a/b)$ denotes the vertical loss with $a$ wins and $b$ losses (ties
  not counted).}
\label{fig:multiclass_mix}
\end{figure*}

\subsection{Structured Estimation}
Unlike the general multiclass case, structured estimation problems
have a higher chance of non-dominant distributions because of the
very large number of labels as well as ties or ambiguity regarding
those labels. For example, in text chunking,
changing the tag one phrase while leaving the rest unchanged
should not drastically change the probability predictions --
especially when there are ambiguities.
Because of the prevalence of non-dominant distributions, we expect
that training models using a hinge loss to perform poorly on these
problems relative to training with hybrid or log losses.

\paragraph{CONLL2000 Text Chunking} Our first structured estimation experiment 
is carried out on the CONLL2000 text chunking task \cite{conll2000}.
The data set has 8936 training sentences and 2012 testing sentences
with 106978 and 23852 phrases (a.k.a. chunks) respectively. The
task is to divide a text into syntactically correlated parts of words
such as noun phrases, verb phrases, and so on. For a sentence with
$L$ chunks, its label consists of the tagging sequence of all its
chunks, \emph{i.e.} $y = (y^1,y^2,\dots,y^L)$, where $y^i$ is the chunking
tag for chunk $i$. As commonly used in this task, the label $y$ is
modelled as a 1D Markov chain to account for the dependency between
adjacent chunking tags $(y_i^j,y_i^{j+1})$ given observation $x_i$.
Clearly, the model has exponentially many possible labels, which
suggests there are many non-dominant classes.

\begin{figure}[tb]
 \centering
 \subfigure[the testing
  set ]{
     \includegraphics[width=0.46\linewidth]{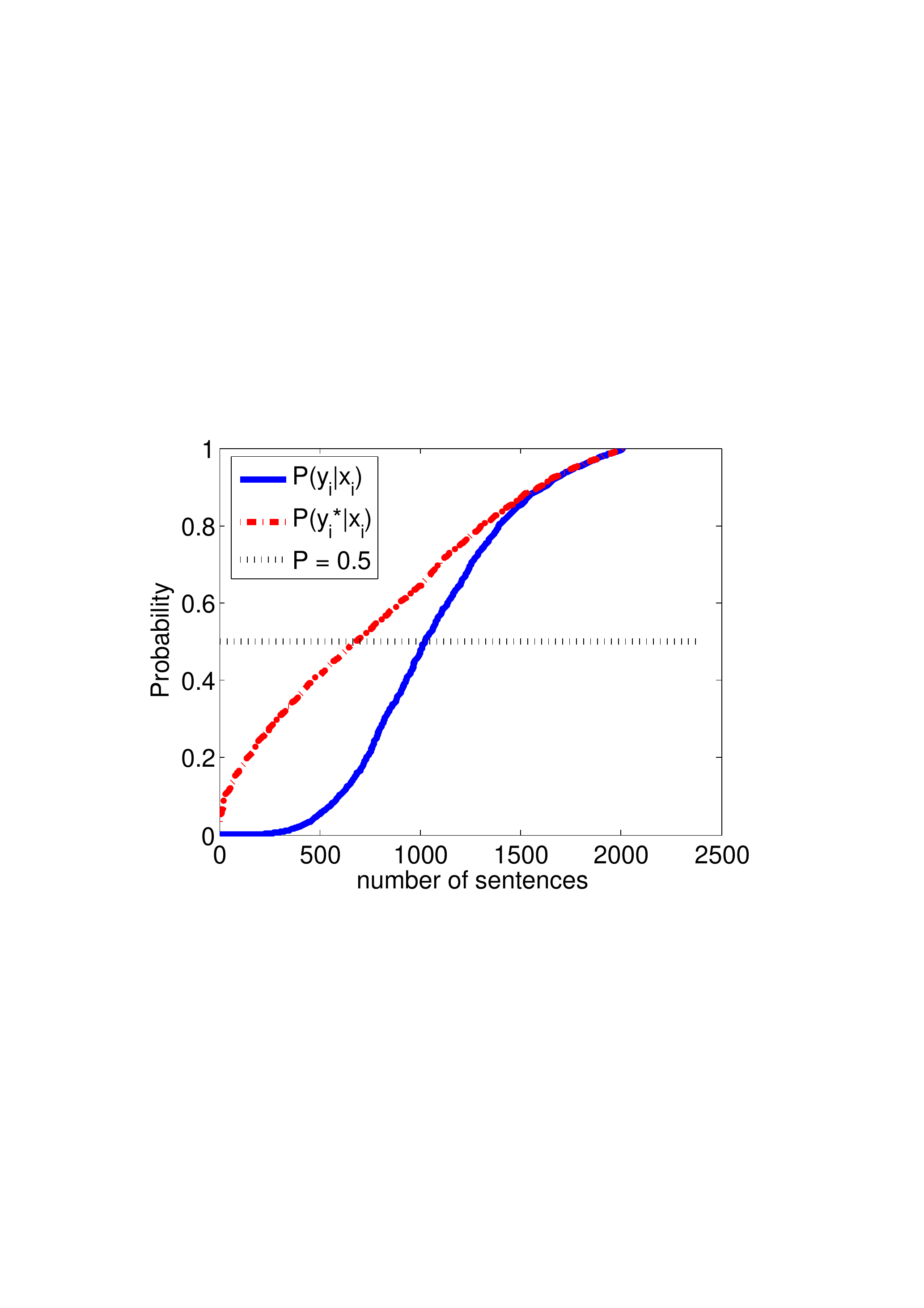}
     \label{fig:chunk_test}
     }
      \subfigure[the training set]{
     \includegraphics[width=0.46\linewidth]{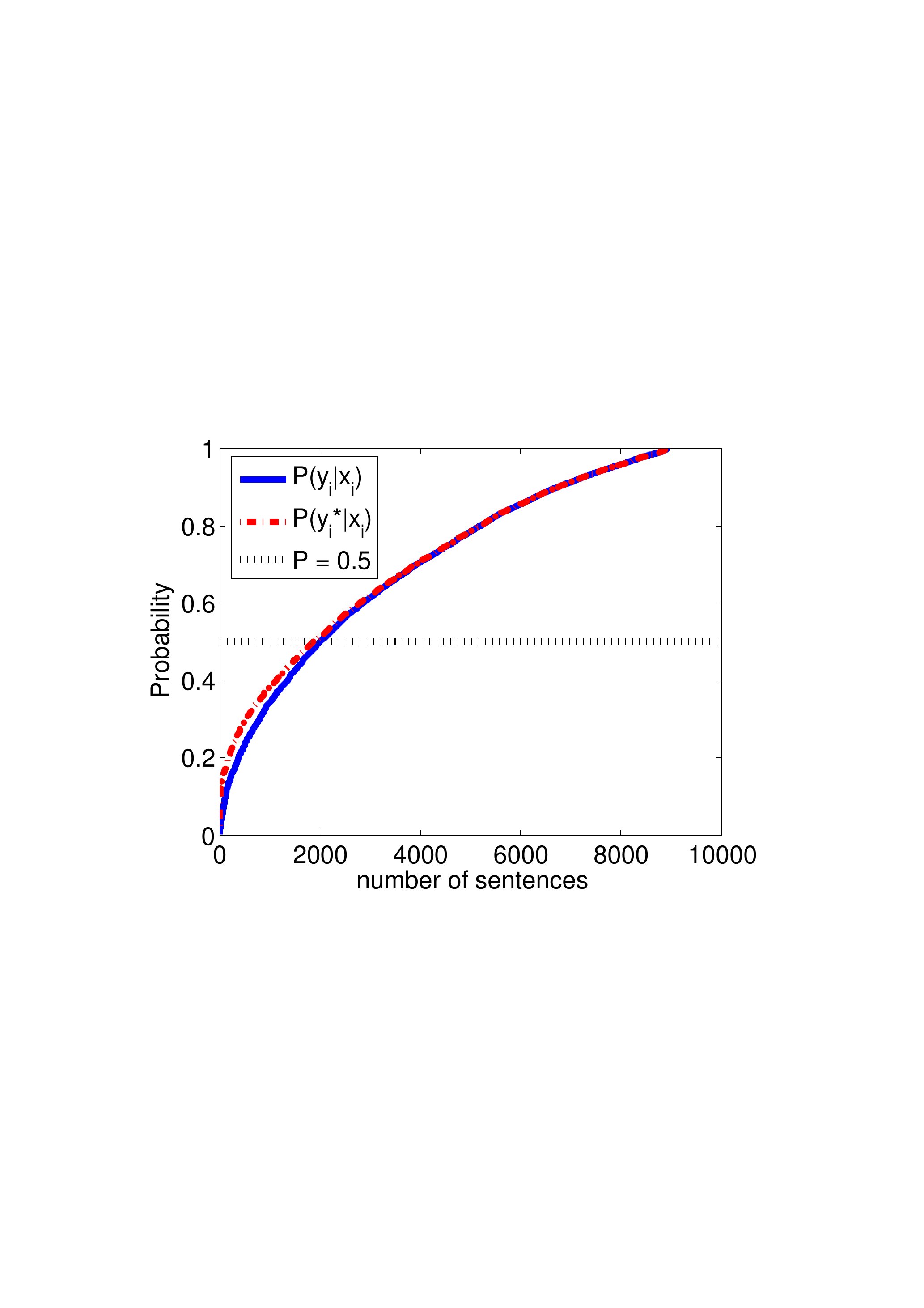}
     \label{fig:chunk_train}
     }
  \caption{Estimated probabilities of the true label $D_{y_i}(x_i)$ and most likely label $D_{y_i^*}(x_i)$. Sentences are sorted according to $D_{y_i}(x_i)$ and $D_{y_i^*}(x_i)$ respectively in ascending
  order. $D=1/2$ is shown as the straight black dot line. About 700
  sentences out of 2012 in the testing set and 2000 sentences out of 8936 in the training set
  have no dominant class.
} \label{fig:chunking}
\end{figure}

\begin{table*}[t]
    \centering {\caption{Accuracy, precision, recall and F1 Score on the
    CONLL2000 text chunking task.} \label{table:conll2000}}
    \begin{tabular}{c|c|c|c|c|c}
    Train Portion &Loss &Accuracy &Precision & Recall &F1 Score\\
    &Hinge&91.14 &85.31 &85.52 &85.41\\
    0.1&Log&92.05 &87.04 &87.01 &87.02\\
    &Hybrid&92.07 &87.17 &86.93 &87.05\\
    \hline
    &Hinge&94.61 &91.23 &91.37 &91.30\\
    1&Log&95.10 &92.32 &91.97 &92.15\\
    &Hybrid&95.11 &92.35 &92.00 &92.17\\
    \end{tabular}
\end{table*}

Since the true underlying distribution is unknown, we train a
CRF\footnote{
    using the feature template from the CRF++ toolkit \cite{crfplusplus}, and
    the CRF code from Leon Bottou \cite{crfsgd}.} 
on the training set and then apply
the trained model to both testing and training datasets to get an
estimate of the conditional distributions for each instance. We sort
the sentences $x_i$ from highest to lowest estimated probability on
the true chunking label $y_i$ given $x_i$. The result is plotted in
Figure~\ref{fig:chunking}, from which we observe the existence of
many non-dominant distributions --- about 1/3 of the testing
sentences and about 1/4 of the training sentences.

We split the data into 3 parts: training ($20\%$), testing ($40\%$)
and validation ($40\%$). The regularisation parameter $\lambda$ and
the weight $\alpha$ were determined via parameter selection using the
validation set. To see the performance with different training
sizes, we took part of the training data to learn the model and gathered statistics
on the test set. The accuracy, precision, recall
and F1 Score on test set are reported in Table~\ref{table:basenp}
when using 10\% and 100\% of the training set. 
The hybrid loss outperforms both the hinge loss and the log loss (albeit 
marginally). 

\begin{table*}[t]
\centering {\caption{Accuracy, precision, recall and F1 Score on the
baseNP chunking task for training on increasing portions of training
set.} \label{table:basenp}
\begin{tabular}{c|c|c|c|c|c}
Train Portion &Loss &Accuracy &Precision & Recall &F1 Score\\
&Hinge&88.48 &71.70 &75.96 &73.77\\ 
0.1&Log&90.86 &81.09 &78.96 &80.01\\
&Hybrid&90.90 &81.23 &79.09 &80.15\\
\hline
&Hinge&94.64 &87.58 &88.30 &87.94\\
1& Log&95.21 &90.07 &88.89 &89.48\\
&Hybrid&95.24 &90.12 &88.98 &89.55\\
\end{tabular}
}\\
\centering {\caption{Accuracy, precision, recall and F1 Score on the
Japanese named entity recognition task.} \label{table:japan}
\begin{tabular}{l|c|c|c|c}
Loss &Accuracy &Precision & Recall &F1 Score\\
Hinge&95.63 &73.24 &64.37 &68.52 \\
Log&95.92 &78.22 &64.85 &70.91\\
Hybrid&95.95 &79.02 &65.32 &71.52\\
\end{tabular} } \centering
\end{table*}
\paragraph{baseNP Chunking}
A similar methodology to the previous experiment is applied to the BaseNP data set  
\cite{crfplusplus}. It has 900 sentences in
total and the task is to automatically classify a chunking phrase is as
baseNP or not. We split the data into 3 parts: training ($20\%$),
testing ($40\%$) and validation ($40\%$). Once again, $\lambda$ and $\alpha$ are
determined via model selection on the validation set. We report the
test accuracy, precision, recall and F1 Score in
Table~\ref{table:basenp} for training on increasing proportion of
the training set. The hybrid outperforms the other two losses on all
measures.

\paragraph{Japanese named entity recognition}
Finally, we used a multiclass data set containing 716 Japanese sentences and 17 
annotated named entities \cite{crfplusplus}. 
The task is to locate and classify proper nouns and
numerical information in a document into certain classes of named
entities such as names of persons, organizations, and locations. We
train all 3 models on 216 sentences and test on 500 sentences with
the default parameters found in Bottou's CRF code. The extra parameter
$\alpha$ is selected for the smallest test error. The result is
reported in Table~\ref{table:japan}. Once again, the hybrid loss outperforms the
others two losses.

\section{Conclusion and Discussion}\label{sec:conclusions}

We have provided theoretical and empirical motivation for the use of
a novel hybrid loss for multiclass and structured prediction
problems which can be used in place of the more common log loss or
multiclass hinge loss. This new loss attempts to blend the strength
of purely discriminative approaches to classification, such as
Support Vector machines, with probabilistic approaches, such as
Conditional Random Fields. Theoretically, the hybrid loss enjoys
better consistency guarantees than the hinge loss while
experimentally we have seen that the addition of a purely
discriminative component can improve accuracy when data is less
prevalent.

\subsection{Future Work}
Theoretically, we expect that some stronger sufficient conditions on $\alpha$ are
possible since the bounds used to establish Theorem~\ref{thm:fcc}
are not tight. Our conjecture is that a necessary and sufficient
condition would include a dependency on the number of classes.
We are also investigating connections between $\alpha$ and the multiclass Tsybakov 
noise condition \cite{Che06}.

To our knowledge, the notion of a regular function class for the purposes of
consistency analysis is a novel one.
Characterisations of this property for various existing parametric models would make 
testing for regularity easier.

One current limitation of the hybrid model is the use of a single, fixed $\alpha$ for 
all observations in a training set. 
One interesting avenue to explore would be trying to
dynamically estimate a good value of $\alpha$ on a per-observation
basis. This may further improve the efficacy of the hybrid loss by
exploiting the robustness of SVMs (low $\alpha$) when the label
distribution for an observation has a dominant class but switching
to probability estimation via CRFs (high $\alpha$) when this is not
the case.


\small \baselineskip=3.7mm
\bibliography{bibfile,smmcrf}
\bibliographystyle{plain}

\appendix
\section{Proof for Consistency}\label{sec:proof-fcc}
\begin{proof}
    {\bf of Theorem~\ref{thm:fcc}}
    We use $L_\alpha(p,D) = \EE_{y\sim D}\left[\ell\alpha(p,y)\right]$
    and $\Delta(\Ycal)$ to denote distributions over $\Ycal$.
    Since we a free to permute labels within $\Ycal$ we will assume without loss
    of generality that $D_1 = \max_{y\in\Ycal} D_y$ and
    $D_2 = \max_{y\ne 1} D_y$.
    The proof now proceeds by contradiction and assumes there is some minimiser
    $p = \argmin_{q\in\Delta(\Ycal)} L_\alpha(q,D)$ that is not aligned with
    $D$.
    That is, there is some $y^*\ne 1$ such that $p_{y^*} \ge p_1$.
    For simplicity, and again without loss of generality, we will assume
    $y^*=2$.

    The first case to consider is when $p_2$ is a maximum and $p_1 < p_2$.
    Here we construct a $q$ that ``flips'' the values of $p_1$ and $p_2$ and
    leaves all the values unchanged.
    That is, $q_1 = p_2$, $q_2 = p_1$ and $q_y = p_y$ for all $y=3,\ldots,k$.
    Intuitively, this new point is closer to $D$ and therefore the CRF component
    of the loss will be reduced while the SVM loss won't increase.
    The difference in conditional risks satisfies
    \begin{eqnarray}
        L_\alpha(p,D) - L_\alpha(q,D)
        &=& \sum_{y=1}^k D_y.(\ell_\alpha(p,y) - \ell_\alpha(q,y)) \nonumber \\
        &=& D_1.(\ell_\alpha(p,1) - \ell_\alpha(q,1)) \nonumber\\
        &&  + D_2.(\ell_\alpha(p,2) - \ell_\alpha(q,2)) \nonumber \\
        &=& (D_1 - D_2)(\ell_\alpha(q,2) - \ell_\alpha(q,1)) \nonumber
    \end{eqnarray}
    since $\ell_\alpha(p,1)=\ell_\alpha(q,2)$ and
    $\ell_\alpha(p,2) = \ell_\alpha(q,1)$ and the other terms cancel by
    construction.
    As $D_1 - D_2 > 0$ by assumption, all that is required now is to show
    that
    $\ell_\alpha(q,2) - \ell_\alpha(q,1) = \alpha\ln\frac{q_1}{q_2}
    +(1-\alpha)(\ell_H(q,2)-\ell_H(q,1))$
    is strictly positive.

    Since $q_1 > q_y$ for $y\ne 1$ we have $\ln\frac{q_1}{q_2} > 0$,
    $\ell_H(q,2) = \left[1-\ln\frac{q_2}{q_1}\right]_+ > 1$, and
    $\ell_H(q,1) = \left[1-\ln\frac{q_1}{q_y}\right]_+ < 1$, and so
    $\ell_H(q,2) - \ell_H(q,1) > 1 - 1 = 0$.
    Thus, $\ell_\alpha(q,2) - \ell_\alpha(q,1) > 0$ as required.

    Now suppose that $p_2 = p_1$ is a maximum.
    In this case we show a slight perturbation
    $q = (p_1 + \epsilon, p_2 - \epsilon, p_3, \ldots, p_k)$ yields a lower
    for $\epsilon >0$.
    For $y\ne 1,2$ we have $\ell_L(p,y) - \ell(q,y) = 0$ and since $p_2 > p_y$
    and $q_1 > q_y$ thus
    $\ell_H(p,y) - \ell_H(q,y)
    = 1-\ln\frac{p_y}{p_2} + 1 -\ln\frac{q_y}{q_1}
    = \ln\frac{p_2}{q_1}
    > 1-\frac{q_1}{p_2}
    = -\frac{\epsilon}{p_2}$
    since $-\ln x > 1 - x$ for $x\in(0,1)$ and
    $q_1 = p_1 + \epsilon = p_2 + \epsilon$.
    Therefore
    \begin{equation}\label{eq:yne12}
        \ell_\alpha(p,y) - \ell_\alpha(q,y) > -\epsilon\frac{(1-\alpha)}{p_1}
    \end{equation}

    When $y=1$,
    $\ell_L(p,1) - \ell_L(q,1) = -\ln\frac{p_1}{q_1} > \frac{q_1 - p_1}{p_1}
    = \frac{\epsilon}{p_1}$ and
    $\ell_H(p,1)-\ell_H(q,1) = (1-\ln\frac{p_1}{p_2}) - (1-\ln\frac{q_1}{q_2}) =
    \ln\frac{q_1}{q_2} = \ln\frac{p_1+\epsilon}{p_1-\epsilon}$ since
    $p_1 = p_2$.
    Thus $\ell_H(p,1)-\ell_H(q,1) > 1 - \frac{p_1 - \epsilon}{p_1 + \epsilon}
    = \frac{2\epsilon}{p_1+\epsilon}$.
    And so
    \begin{equation}\label{eq:y1}
        \ell_\alpha(p,y) - \ell_\alpha(q,y)
        > \epsilon\left[\frac{\alpha}{p_1}
            + \frac{2(1-\alpha)}{p_1+\epsilon}\right]
    \end{equation}
    Finally, when $y=2$ we have $\ell_L(p,2)-\ell_L(q,2) = -\ln\frac{p_2}{q_2}
    > \frac{q_2 - p_2}{p_2} = \frac{-\epsilon}{p_1}$ and
    $\ell_H(p,2) - \ell_H(q,2) = (1-\ln\frac{p_2}{p_1})-(1-\ln\frac{q_2}{q_1})
    = \ln\frac{q_2}{q_1} > 1 - \frac{q_1}{q_2} =
    \frac{-2\epsilon}{p_1+\epsilon}$.
    Thus,
    \begin{equation}\label{eq:y2}
        \ell_\alpha(p,2) - \ell_\alpha(q,2)
        > -\epsilon\left[
            \frac{\alpha}{p_1} + \frac{2(1-\alpha)}{p_1+\epsilon}
        \right].
    \end{equation}

    Putting the inequalities (\ref{eq:yne12}), (\ref{eq:y1}) and (\ref{eq:y2})
    together yields
    \begin{eqnarray*}
        \lefteqn{\lim_{\epsilon\to 0} \frac{L_\alpha(p,D) - L_\alpha(q,D)}{\epsilon}} \\
        &>& \lim_{\epsilon\to 0}
            (D_1-D_2)
            \left[\frac{\alpha}{p_1}+\frac{2(1-\alpha)}{p_1+\epsilon}\right]
            - \sum_{y=3}^k D_y\frac{1-\alpha}{p_1} \\
        &=& \frac{D_1 - D_2}{p_1}(2-\alpha) - \frac{1-D_1-D_2}{p_1}(1-\alpha) \\
        &=& \frac{1}{p_1}(D_1 - D_2 + (1-\alpha)(2D_1 - 1)).
    \end{eqnarray*}
    Observing that since $D_1 > D_2$,
    when $D_1 > \frac{1}{2}$ the final term is positive without any constraint
    on $\alpha$
    and when $D_1 < \frac{1}{2}$ the difference in risks is positive
    whenever
    \begin{equation}
        \alpha > 1 - \frac{D_1 - D_2}{1 - 2D_1}
    \end{equation}
    completes the proof.
\end{proof}

\section{Proof of Necessity of FCC}
\label{sec:proof-fcc-nec}

\begin{proof}
    {\bf of Theorem~\ref{thm:fcc-necessary}}
    The proof is by contradiction.
    We assume we have a regular function class $\Fcal$ and a loss $\ell$ which is
    $\Fcal$-consistent but not FCC.
    That is, (\ref{eq:F-consistency}) holds for $\ell$ but there exists a
    distribution $p$ over $\Ycal$
    such that there is a $g \in \RR^{\Ycal}$ which minimises the conditional risk
    $L_{q}(g)$ but $\argmax_{y\in\Ycal} g_y \ne \argmax_{y\in\Ycal} q_y$.

    By the assumption of the regularity of $\Fcal$ there is an $x\in\Xcal$ and a
    $f\in\Fcal$ so that $f(x) = g$.
    We now define a distribution $D$ over $\Xcal\times\Ycal$ that puts all its mass
    on the set $\{x\}\times\Ycal$ so that $D(x,y) = p_y$.
    Since this distribution is concentrated on a single $x$ its full risk and
    conditional risk on $x$ are the same. That is, $L_D(\cdot) = L_p(\cdot)$.
    Thus,
    \[
        L_D(f) = L_p(f) = \inf_{f'\in\Fcal} L_p(f') = \inf_{f'\in\Fcal} L_D(f')
    \]
    By the assumption of $\Fcal$-consistency, since $f$ is a minimiser of $L_D$ it
    must also minimise $e_D$.
    Once again, the construction of $D$ means that
    $e_D(f) = e_p(g) = \PP_{y\sim p}\left[y \ne \argmax_{y'\in\Ycal}g_y\right] =
    1 - p_{y_g}$ where $y_g = \argmax_y g_y$ is the label predicted by $g$.
    However,
    \[
        e_D(f) = e_p(g) = 1-p_{y_g} > 1 - p_{y^*}
    \]
    since $y_* = \argmax_y p_y \ne \argmax g_y = y_g$.

    By the second regularity property, there must also be an $\hat{f}\in\Fcal$ such
    that $\argmax_y \hat{f}_y(x) = y^*$ so that
    $e_D(f) > \inf_{f'\in\Fcal} e_D(f') = e_D(\hat{f}) = 1 - p_{y^*}$.
    Thus, we have shown that there exists a distribution $D$ so $f\in\Fcal$ is a
    minimiser of the risk $L_D$ but is not a minimiser of the
    misclassification rate $e_D$ which contradicts the assumption of the
    $\Fcal$-consistency of $\ell$. Therefore, $\ell$ must be FCC.

\end{proof}

\section{Proof for PAC-Bayes Bounds}
\label{sec:pac-bayes-proof}

For explicitly, we rewrite $M$ and $p_y$ as $M(x,y;w)$ and
$p(y|x;w)$ when they are parameterized by $w$.

\begin{theorem}[Generalisation Bound]
\label{thm:pac-bayes-coupling-single} For any data distribution $D$,
for any prior $P$ over $w$, for any $\delta \in (0,1]$ and $\alpha
\in [0,1)$ and for any $\gamma \geq 0$, for any $w$, with
probability at least $1-\delta$ over random samples $S$ from $D$
with $m$ instances, we have
\begin{align*}
\nonumber \EE_{D}\Big[\Big(\gamma-M(x,y;w)\Big)_+\Big] \leq
\frac{1}{m}\sum_{i=1}^m\Big(\gamma-M(x_i,y_i;w)\Big)_+\\\nonumber
+\frac{1}{(1-\alpha)}\left(\alpha\sqrt{\frac{1}{m}} +
\sqrt{\frac{\ln \frac{1}{P(w)}+\ln A(\alpha,w)+\ln
\frac{1}{\delta(1-e^{-2})}}{2m}}\right),
\end{align*}
where \begin{align*} R(\alpha,w)
=\alpha\EE_{D}\Big[-\ln{p(y|x;w)}\Big]+(1-\alpha)\EE_{D}\Big[\Big(\gamma-M(x,y;w)\Big)_+\Big],\\
R_S(\alpha,w) =\Big[\alpha\frac{\sum_{i=1}^m -\ln{p(y_i|x_i;w)}}{m}
+(1-\alpha) \frac{\sum_{i=1}^m \Big(\gamma-M(x_i,y_i;w)\Big)_+}{m}\Big],\\
A(\alpha,w) = \EE_{s\sim
D^m}e^{2m(R(\alpha,w)-R_S(\alpha,w))^2}.\end{align*}
\end{theorem}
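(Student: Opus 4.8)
The plan is to obtain the bound on the expected hinge-type quantity $\EE_D[(\gamma-M(x,y;w))_+]$ by viewing it as one summand of the hybrid risk $R(\alpha,w)$ and then controlling the deviation of $R$ from its empirical counterpart $R_S$. The starting point is the exact algebraic identity that follows from the definitions of $R(\alpha,w)$ and $R_S(\alpha,w)$:
\begin{align*}
\EE_D[(\gamma-M)_+] &= \frac{1}{m}\sum_{i=1}^m (\gamma-M(x_i,y_i;w))_+ + \frac{1}{1-\alpha}\big(R(\alpha,w)-R_S(\alpha,w)\big)\\
&\quad + \frac{\alpha}{1-\alpha}\Big(\frac{1}{m}\sum_{i=1}^m -\ln p(y_i|x_i;w) - \EE_D[-\ln p(y|x;w)]\Big).
\end{align*}
This isolates the target on the left; on the right it is the empirical hinge average plus two deviation terms weighted by $\frac{1}{1-\alpha}$ and $\frac{\alpha}{1-\alpha}$ (well defined since $\alpha\in[0,1)$). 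The whole problem thus reduces to bounding, with high probability, (a) the hybrid deviation $R(\alpha,w)-R_S(\alpha,w)$ and (b) the one-sided deviation of the empirical log loss above its mean.

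For (a) I would apply Markov's inequality to the nonnegative random variable $e^{2m(R-R_S)^2}$, whose expectation is exactly $A(\alpha,w)$ by definition. This gives, with probability at least $1-\epsilon$, the estimate $2m(R-R_S)^2 \le \ln A(\alpha,w) + \ln\frac{1}{\epsilon}$, hence $R-R_S \le \sqrt{(\ln A(\alpha,w)+\ln\frac{1}{\epsilon})/(2m)}$. To make this hold for the quoted $w$ under an arbitrary prior $P$, I would spread the failure budget across hypotheses proportionally to $P$, setting $\epsilon = P(w)\,\delta\,(1-e^{-2})$ and reserving the remaining $e^{-2}$ fraction of $\delta$ for the log-loss event. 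Since $\sum_w P(w)\le 1$, the total failure probability stays within $\delta$, and the choice of $\epsilon$ produces precisely the $\ln\frac{1}{P(w)}$ and $\ln\frac{1}{\delta(1-e^{-2})}$ terms in the statement.

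For (b), a second exponential-moment argument controls $\frac{1}{m}\sum_i -\ln p(y_i|x_i;w) - \EE_D[-\ln p(y|x;w)]$: at the scale $t = m^{-1/2}$ the exponent satisfies $2mt^2 = 2$, so the same device bounds this deviation by $\sqrt{1/m}$ on an event whose probability mass is the withheld $e^{-2}\delta$. Substituting the two high-probability bounds into the identity and collecting the $\frac{1}{1-\alpha}$ factors yields exactly the claimed inequality, with the $\alpha\sqrt{1/m}$ contribution coming from the CRF (log-loss) component and the square-root complexity term from the hybrid concentration.

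The hard part will be the unboundedness of the log loss $-\ln p(y|x;w)$: the standard bounded-differences route (Hoeffding/McDiarmid) to sub-Gaussian concentration is unavailable, which is exactly why the bound carries the explicit moment factor $A(\alpha,w)$ instead of a closed-form variance proxy. The delicate points are therefore (i) arguing that $\EE_{S\sim D^m}e^{2m(R-R_S)^2}$ is finite and well-behaved so the Markov step is meaningful, and (ii) obtaining the one-sided $m^{-1/2}$ control of the log-loss deviation in (b) without boundedness. The remaining work, namely the $\delta$-budget split giving the $(1-e^{-2})$ constant and the prior-weighted union bound, is routine bookkeeping once the two concentration events are set up, but it must be coordinated so that both square-root terms combine under a single $1-\delta$ guarantee.
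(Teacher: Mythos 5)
Your decomposition of $\EE_{D}\big[(\gamma-M)_+\big]$ into the empirical hinge average plus $\tfrac{1}{1-\alpha}(R-R_S)$ plus $\tfrac{\alpha}{1-\alpha}$ times the one-sided log-loss deviation is exactly the one underlying the paper's argument, and your treatment of $R-R_S$ --- Markov's inequality applied to $e^{2m(R-R_S)^2}$, whose mean is $A(\alpha,w)$, followed by a prior-weighted union bound producing the $\ln\frac{1}{P(w)}$ term --- is precisely the single-hypothesis specialisation of the PAC-Bayes step the paper carries out via Lemma~\ref{thm:pac-bayes-gibbs} and Theorem~\ref{thm:pac-bayes-coupling} (which it then instantiates at a point-mass posterior, turning $\text{KL}(Q\|P)$ into $\ln\frac{1}{P(w)}$). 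Up to that point you have the same proof in different clothing.

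The gap is in your step (b). You assert that the log-loss deviation is at most $\sqrt{1/m}$ on an event of probability $1-e^{-2}\delta$, so that the two failure probabilities sum to $\delta$. The Chernoff/Hoeffding device does not deliver this: at deviation level $\epsilon=1/\sqrt{m}$ the exponent is $2m\epsilon^2=2$, so the failure probability is the \emph{constant} $e^{-2}\approx 0.135$, independent of $\delta$; to drive it down to $e^{-2}\delta$ you would need $\epsilon=\sqrt{(2+\ln(1/\delta))/(2m)}$, which changes the $\alpha\sqrt{1/m}$ term in the statement. With your accounting the total failure probability is $(1-e^{-2})\delta+e^{-2}$, which does not tend to $0$ with $\delta$. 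The paper handles this point differently: it keeps the event $B(S)<\epsilon$ at the fixed confidence $1-e^{-2}$ and folds it in by intersecting with the PAC-Bayes bad event and dividing, concluding $\PP(\text{bad})\le \delta/\PP(B(S)<\epsilon)\le \delta/(1-e^{-2})$ and then renaming $\delta'=\delta/(1-e^{-2})$ --- that is where the $(1-e^{-2})$ inside the logarithm comes from; it is not a split of the $\delta$ budget. (For what it is worth, the paper's own route is also delicate: it uses $\PP(E\mid B(S)<\epsilon)\ge\PP(E)$, which is not justified for arbitrary events, and it applies a Hoeffding-type bound to the unbounded log loss --- the latter being exactly the difficulty you correctly flag. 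But as written your budget arithmetic does not close, so the claimed $1-\delta$ guarantee does not follow from your argument.)
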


Here $A$ is upper bounded independently of $D$. For example, for a
zero-one loss, it is upper bounded by $m+1$ (see
\cite{GerLacLavMar08}). The theorem gives a bound on the true margin
error of the hybrid model. The theorem follows theorem
\ref{thm:pac-bayes-coupling} in the appendix immediately.

\begin{lemma}[PAC-Bayes bound\cite{McAllester01,GerLacLavMar08}]
\label{thm:pac-bayes-gibbs} For any data distribution $D$, for any
prior $P$ and posterior $Q$ over $w$, for any $\delta \in (0,1]$,
for any loss $\ell$. With probability at least $1-\delta$ over
random sample $S$ from $D$ with $m$ instances, we have
\begin{align}
\nonumber R(Q,\ell) \leq R_S(Q,\ell) +
\sqrt{\frac{\text{KL}(Q||P)+\ln(\frac{1}{\delta}\EE_{s\sim D^m}\EE_{w\sim
P}e^{2m(R(Q,\ell)-R_S(Q,\ell))^2})}{2m}},
\end{align}
where $\text{KL}(Q||P) := \EE_{w\sim Q} \ln(\frac{Q(w)}{P(w)}) $ is the
Kullback-Leibler divergence between $Q$ and $P$, and $R(Q,\ell)
=\EE_{Q,D}[\ell(x,y;w)], R_S(Q,\ell) =\EE_{Q}\frac{\sum_{i=1}^m
\ell(x_i,y_i;w)}{m}.$
\end{lemma}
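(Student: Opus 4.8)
The plan is to obtain the bound from two classical ingredients: the variational (change-of-measure) characterisation of the Kullback--Leibler divergence, followed by a single application of Markov's inequality to an exponential moment that depends only on the prior $P$ and the sample $S$. First I would fix a hypothesis $w$ and write the per-hypothesis risks $R(w,\ell) = \EE_D[\ell(x,y;w)]$ and $R_S(w,\ell) = \frac{1}{m}\sum_{i=1}^m \ell(x_i,y_i;w)$, so that $R(Q,\ell) = \EE_{w\sim Q}[R(w,\ell)]$ and $R_S(Q,\ell) = \EE_{w\sim Q}[R_S(w,\ell)]$, and define the deviation $\Delta(w) = R(w,\ell) - R_S(w,\ell)$; the exponential moment appearing in the statement is then $\EE_{w\sim P}[e^{2m\Delta(w)^2}]$, with $R$ and $R_S$ read at the sampled $w$.

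The central step is the Donsker--Varadhan / Gibbs variational inequality: for any measurable $\phi$ and any posterior $Q$ absolutely continuous with respect to $P$,
\[
    \EE_{w\sim Q}[\phi(w)] \leq \text{KL}(Q\|P) + \ln \EE_{w\sim P}\left[e^{\phi(w)}\right].
\]
Taking $\phi(w) = 2m\,\Delta(w)^2$ and bounding the left-hand side from below by Jensen's inequality (convexity of $t\mapsto t^2$) gives
\[
    2m\,(R(Q,\ell) - R_S(Q,\ell))^2 \leq 2m\,\EE_{w\sim Q}[\Delta(w)^2] \leq \text{KL}(Q\|P) + \ln \EE_{w\sim P}\left[e^{2m\Delta(w)^2}\right].
\]
I would then control the random quantity $X = \EE_{w\sim P}[e^{2m\Delta(w)^2}]$: since $X \geq 0$, Markov's inequality yields $X \leq \frac{1}{\delta}\EE_{S\sim D^m}[X]$ with probability at least $1-\delta$ over the draw of $S$. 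Substituting this bound, taking square roots and using $R(Q,\ell) - R_S(Q,\ell) \leq \sqrt{(R(Q,\ell)-R_S(Q,\ell))^2}$ produces exactly the claimed inequality.

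The main obstacle is not any single calculation but the order in which the two probabilistic steps are arranged. The change of measure must be carried out pointwise in $S$, so that the resulting exponential moment $X$ is a function of $S$ and $P$ alone and carries no dependence on $Q$; only because $X$ is $Q$-free can a single application of Markov control it uniformly, making the resulting ``with probability at least $1-\delta$'' statement valid simultaneously for every posterior $Q$. Applying Markov before the change of measure would leave a $Q$-dependent tail event and the quantifiers could no longer be exchanged. I would therefore be careful to state the absolute-continuity hypothesis needed for the variational bound and to verify that $X$ genuinely does not depend on $Q$ before invoking Markov; the remaining steps (Jensen and the square root) are routine.
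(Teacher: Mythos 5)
Your proof is correct: the Donsker--Varadhan change of measure applied to $\phi(w)=2m\Delta(w)^2$, Jensen to pass from $(\EE_Q\Delta)^2$ to $\EE_Q[\Delta^2]$, and a single Markov step on the $Q$-free random variable $\EE_{w\sim P}[e^{2m\Delta(w)^2}]$ is exactly the standard argument for this general PAC-Bayes inequality, and your remark about performing the change of measure pointwise in $S$ before invoking Markov correctly identifies the step that makes the bound hold simultaneously for all posteriors $Q$. Note that the paper does not prove this lemma at all --- it imports it by citation from McAllester and Germain et al. --- so your reconstruction matches the proof in those cited references rather than anything in the paper itself.
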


\begin{theorem}[Bound on Averaging classifier]
\label{thm:pac-bayes-coupling} For any data distribution $D$, for
any prior $P$ and posterior $Q$ over $w$, for any $\delta \in (0,1]$
and $\alpha \in [0,1)$ and for any $\gamma \geq 0$. With probability
at least $1-\delta$ over random sample $S$ from $D$ with $m$
instances, we have
\begin{align*}
\nonumber \EE_{Q,D}\Big[[\gamma-M(x,y;w)]_+\Big] \leq
\frac{1}{m}\EE_{Q}\Big[\sum_{i=1}^m[\gamma-M(x_i,y_i;w)]_+\Big] \\
+\frac{\alpha}{1-\alpha}\sqrt{\frac{1}{m}} +
\frac{1}{1-\alpha}\sqrt{\frac{\text{KL}(Q||P)+\ln A(\alpha)+\ln
\frac{1}{\delta(1-e^{-2})}}{2m}},
\end{align*}
where $\text{KL}(Q||P) := \EE_{w\sim Q} \ln(\frac{Q(w)}{P(w)}) $ is the
Kullback-Leibler divergence between $Q$ and $P$, and \begin{align*}
R(\alpha)
=\alpha\EE_{Q,D}\Big[-\ln{p(y|x;w)}\Big]+(1-\alpha)\EE_{Q,D}\Big[\Big(\gamma-M(x,y;w)\Big)_+\Big],\\
R_S(\alpha) =\EE_{Q}\Big[\alpha\frac{\sum_{i=1}^m
-\ln{p(y_i|x_i;w)}}{m}
+(1-\alpha) \frac{\sum_{i=1}^m \Big(\gamma-M(x_i,y_i;w)\Big)_+}{m}\Big],\\
A(\alpha) = \EE_{s\sim D^m}\EE_{w\sim
P}e^{2m(R(\alpha)-R_S(\alpha))^2}.\end{align*}
\end{theorem}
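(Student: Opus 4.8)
The plan is to recognise the three quantities $R(\alpha)$, $R_S(\alpha)$ and $A(\alpha)$ in the statement as the Gibbs risk, empirical Gibbs risk and exponential-moment term for a single composite loss, and then to feed that loss into the generic PAC-Bayes bound of Lemma~\ref{thm:pac-bayes-gibbs}. Concretely, I would apply the lemma with
\[
    \ell(x,y;w) = \alpha(-\ln p(y|x;w)) + (1-\alpha)(\gamma - M(x,y;w))_+,
\]
which is exactly the $\gamma$-margin version of the hybrid loss $\ell_\alpha$ of~(\ref{eq:hybrid}). With this choice the lemma's $R(Q,\ell)$ and $R_S(Q,\ell)$ coincide with $R(\alpha)$ and $R_S(\alpha)$, and its moment term $\EE_{s}\EE_{w\sim P}e^{2m(R(Q,\ell)-R_S(Q,\ell))^2}$ is precisely $A(\alpha)$. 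Lemma~\ref{thm:pac-bayes-gibbs} then yields immediately, with probability at least $1-\delta$,
\[
    R(\alpha) \le R_S(\alpha) + \sqrt{\frac{\text{KL}(Q\|P) + \ln(\tfrac{1}{\delta}A(\alpha))}{2m}}.
\]

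The next step is purely algebraic: isolate the margin expectation from the composite risk. Writing $L=\EE_{Q,D}[-\ln p]$ and $\hat L=\EE_Q[\tfrac1m\sum_i -\ln p(y_i|x_i;w)]$ for the true and empirical log-loss contributions, and $H,\hat H$ for the corresponding margin-loss contributions, the displayed inequality reads $\alpha L+(1-\alpha)H \le \alpha\hat L+(1-\alpha)\hat H+\sqrt{\cdots}$. Subtracting $\alpha L$ and dividing by $1-\alpha$ (which is exactly why $\alpha=1$ must be excluded) gives
\[
    H \le \hat H + \frac{\alpha}{1-\alpha}(\hat L - L) + \frac{1}{1-\alpha}\sqrt{\frac{\text{KL}(Q\|P)+\ln(\tfrac{1}{\delta}A(\alpha))}{2m}}.
\]
Here $\hat H$ is exactly the first term on the right of the theorem, and the last term matches the stated $\tfrac{1}{1-\alpha}\sqrt{\cdots}$ once $\ln(\tfrac1\delta A(\alpha))$ is split as $\ln A(\alpha)+\ln\tfrac1\delta$ and the harmless constant $(1-e^{-2})$ is inserted. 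It therefore only remains to control the residual log-loss discrepancy $\hat L - L$ by $\sqrt{1/m}$, which is what produces the $\tfrac{\alpha}{1-\alpha}\sqrt{1/m}$ summand.

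The hard part will be exactly this last step: bounding how far the empirical log loss can exceed its expectation. By the i.i.d.\ assumption $\EE_S[\hat L]=L$, so $\hat L-L$ is mean-zero, but $-\ln p$ is unbounded above and a naive Hoeffding argument does not apply. Since the $\sqrt{1/m}$ term appears in the statement with \emph{no} $\ln(1/\delta)$ dependence, I expect it must come from a second-moment (variance) control of $\hat L$ rather than a tail bound --- e.g.\ a Cauchy--Schwarz estimate $\EE_S[(\hat L-L)_+]\le\sqrt{\operatorname{Var}(\hat L)}=\sqrt{\operatorname{Var}_1/m}$ under a bounded per-example variance --- with the heavy tail of the log loss absorbed into the finiteness of $A(\alpha)$, using the remark after Theorem~\ref{thm:pac-bayes-coupling-single} that $A(\alpha)$ is bounded independently of $D$. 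Verifying that $A(\alpha)$ is genuinely finite once the log-loss component is present, and justifying the splicing of an in-expectation bound on $\hat L-L$ into an otherwise high-probability statement, is the delicate point; the remaining manipulations (the convex-combination algebra above and the use of $\sqrt{a+b}\le\sqrt a+\sqrt b$ to separate the two square roots) are routine.
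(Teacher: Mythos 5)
Your opening two steps coincide exactly with the paper's: Lemma~\ref{thm:pac-bayes-gibbs} is instantiated with the composite $\gamma$-margin hybrid loss so that its Gibbs risks and moment term become $R(\alpha)$, $R_S(\alpha)$ and $A(\alpha)$, and the margin expectation is then isolated by subtracting the log-loss contribution and dividing by $1-\alpha$. The gap is precisely in the step you defer, the control of $\hat{L}-L$, and the mechanism you conjecture for it is not the one used. The $\frac{\alpha}{1-\alpha}\sqrt{1/m}$ term does not arise from a variance or Cauchy--Schwarz estimate: the paper applies a one-sided Chernoff/Hoeffding bound to the empirical log loss at the \emph{fixed} deviation $\epsilon=\sqrt{1/m}$, defining $B(S)=\hat{L}-L$ and obtaining $\PP_{S\sim D^m}\left(B(S)<\epsilon\right)>1-e^{-2m\epsilon^2}=1-e^{-2}$. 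It then intersects the PAC-Bayes failure event with $\{B(S)<\epsilon\}$ (on which $\alpha(\hat{L}-L)<\alpha\epsilon$, so the composite inequality implies the margin-only inequality with the extra $\alpha\epsilon$ slack), factors the joint probability as a conditional times $\PP(B(S)<\epsilon)$, and divides through by $\PP(B(S)<\epsilon)\ge 1-e^{-2}$. The factor $(1-e^{-2})$ that you dismissed as a ``harmless constant inserted'' is therefore not cosmetic --- it is exactly the price of this step, entering via the reparametrisation $\delta'=\delta/(1-e^{-2})$ and surfacing as $\ln\frac{1}{\delta(1-e^{-2})}$. Likewise, the absence of a $\ln(1/\delta)$ dependence in the $\sqrt{1/m}$ term is explained by $\epsilon$ being fixed independently of $\delta$, not by the bound being an in-expectation one; and no $\sqrt{a+b}\le\sqrt{a}+\sqrt{b}$ split is needed, since the two additive terms come from two separate sources.

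Two further remarks. First, your worry that a naive Hoeffding argument does not apply to the unbounded loss $-\ln p$ is well taken: the paper's Chernoff step implicitly treats the log loss as bounded (or sub-Gaussian with constant one), and this is not justified in the text, so your instinct identifies a real weakness in the paper's own argument rather than resolving it. Second, your proposed repair --- splicing an in-expectation bound on $\hat{L}-L$ into a high-probability statement with the tail ``absorbed into $A(\alpha)$'' --- would not go through as stated: $A(\alpha)$ controls the deviation of the \emph{composite} empirical risk inside the PAC-Bayes bound and cannot simultaneously absorb a separate tail event for the log-loss component; converting an expectation bound on $\hat{L}-L$ into a probability statement would require a Markov-type step costing additional failure probability, which is exactly what the paper's explicit $1-e^{-2}$ event accounting supplies.
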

\begin{proof}
Since $\EE_{D}\left(\EE_{Q}\Big[\frac{\sum_{i=1}^m -\ln
{p(y_i|x_i;w)}}{m}\Big]\right) = \EE_{Q,D}\Big[-\ln{p(y|x;w)}\Big],
$ by Chernoff bound we have \[ \PP_{S\sim
D^m}\left(\EE_{Q}\left[\frac{\sum_{i=1}^m
-\ln{p(y_i|x_i;w)}}{m}\right] - \EE_{Q,D}\Big[-\ln{p(y|x;w)}\Big] <
\epsilon\right) > 1-e^{-2m\epsilon^2}. \]

Define $B(S) :=\EE_{Q}\left[\frac{\sum_{i=1}^m
-\ln{p(y_i|x_i;w)}}{m}\right] - \EE_{Q,D}\Big[-\ln{p(y|x;w)}\Big]. $

Applying Lemma \ref{thm:pac-bayes-gibbs} for $R(\alpha)$ and
$R_S(\alpha)$, we have for any $P,Q$
\begin{align*}
\delta > & \PP_{S\sim D^m}\left(R(\alpha) \geq
R_S(\alpha)+\sqrt{\frac{\text{KL}(Q||P)+\ln \frac{1}{\delta} + \ln
A(\alpha)}{2m}}\right)\\
\geq &\PP_{S\sim D^m}\left(R(\alpha) \geq
R_S(\alpha)+\sqrt{\frac{\text{KL}(Q||P)+\ln \frac{1}{\delta} + \ln
A(\alpha)}{2m}}, B(S) < \epsilon\right)\\
\geq& \PP_{S\sim
D^m}\left((1-\alpha)\EE_{Q,D}\Big[\Big(\gamma-M(x,y;w)\Big)_+\Big]
\geq (1-\alpha) \frac{\sum_{i=1}^m
\Big(\gamma-M(x_i,y_i;w)\Big)_+}{m}\right.\nonumber
\\&\left.+\alpha\epsilon +\sqrt{\frac{\text{KL}(Q||P)+\ln \frac{1}{\delta} + \ln
A(\alpha)}{2m}}, B(S) < \epsilon\right)\\
=& \PP_{S\sim
D^m}\left((1-\alpha)\EE_{Q,D}\Big[\Big(\gamma-M(x,y;w)\Big)_+\Big]
\geq (1-\alpha) \frac{\sum_{i=1}^m
\Big(\gamma-M(x_i,y_i;w)\Big)_+}{m}\right.\nonumber
\\&\left.+\alpha\epsilon +\sqrt{\frac{\text{KL}(Q||P)+\ln \frac{1}{\delta} + \ln
A(\alpha)}{2m}}\Big|B(S) < \epsilon\right)\PP_{S\sim D^m}\Big(B(S) < \epsilon\Big)\\
&\geq \PP_{S\sim
D^m}\left((1-\alpha)\EE_{Q,D}\Big[\Big(\gamma-M(x,y;w)\Big)_+\Big]
\geq (1-\alpha) \frac{\sum_{i=1}^m
\Big(\gamma-M(x_i,y_i;w)\Big)_+}{m}
\right.\nonumber\\&\left.+\alpha\epsilon +\sqrt{\frac{\text{KL}(Q||P)+\ln
\frac{1}{\delta} + \ln A(\alpha)}{2m}}\right)\PP_{S\sim
D^m}\Big(B(S) < \epsilon\Big)
\end{align*}
Divide two sides by $\PP_{S\sim D^m}\Big(B(S) < \epsilon\Big)$, we
get
\begin{align*}
&\PP_{S\sim
D^m}\left((1-\alpha)\EE_{Q,D}\Big[\Big(\gamma-M(x,y;w)\Big)_+\Big]
\geq (1-\alpha) \frac{\sum_{i=1}^m
\Big(\gamma-M(x_i,y_i;w)\Big)_+}{m}\right.\nonumber
\\&\left.+\alpha\epsilon +\sqrt{\frac{\text{KL}(Q||P)+\ln \frac{1}{\delta} + \ln
A(\alpha)}{2m}}\right) \leq \frac{\delta}{ \PP_{S\sim D^m}\Big(B(S)
<\epsilon\Big)}\leq \frac{\delta}{1-e^{-2m\epsilon^2}}.
\end{align*}
Let $\epsilon = \sqrt{\frac{1}{m}}$, and then let
$\delta'=\frac{\delta}{1-e^{-2m(\epsilon ^2)}}
=\frac{\delta}{1-e^{-2}}$, we get $\delta =
\frac{1}{\delta'(1-e^{-2})}$. The theorem follows by substituting
$\delta$  with $\delta'$ and dividing by $(1-\alpha)$ on both sides
of the inequality inside of the probability.
\end{proof}

\end{document}